\title{Towards General-Purpose Model-Free\\ Reinforcement Learning} %
\author{Scott Fujimoto, Pierluca D'Oro, Amy Zhang, Yuandong Tian, Michael Rabbat \\
Meta FAIR
}
\begin{document}

\maketitle

\begin{abstract}
Reinforcement learning (RL) promises a framework for near-universal problem-solving. In practice however, RL algorithms are often tailored to specific benchmarks, relying on carefully tuned hyperparameters and algorithmic choices. Recently, powerful model-based RL methods have shown impressive general results across benchmarks but come at the cost of increased complexity and slow run times, limiting their broader applicability. In this paper, we attempt to find a unifying model-free deep RL algorithm that can address a diverse class of domains and problem settings. To achieve this, we leverage model-based representations that approximately linearize the value function, taking advantage of the denser task objectives used by model-based RL while avoiding the costs associated with planning or simulated trajectories. We evaluate our algorithm, MR.Q, on a variety of common RL benchmarks with a single set of hyperparameters and show a competitive performance against domain-specific and general baselines, providing a concrete step towards building general-purpose model-free deep RL algorithms. 
{\def\thefootnote{}\footnotetext{Correspondence: \texttt{sfujimoto@meta.com}. Code: \url{https://github.com/facebookresearch/MRQ}.}}
\end{abstract}

\vspace{-4pt}
\input{figures/summary}
\vspace{-4pt}

\section{Introduction}

The conceptual premise of RL is inherently general-purpose---an RL agent can learn optimal behavior with only two basic elements: a well-defined objective and data describing its interactions with the environment. In reality, however, most RL algorithms are anything but general-purpose. Instead, RL algorithms are highly specialized and typically characterized by specific problem classes, such 
\begin{wraptable}[16]{r}{0.5\textwidth}
\vspace{-4pt}
\small
\centering
\caption{Hyperparameter differences between Rainbow~\citep{hessel2018rainbow} and TD3~\citep{fujimoto2018addressing}. TD3 uses an expected moving average~(EMA) update with an effective frequency of $\frac{1}{1 - 0.995} = 200$. } \label{table:rainbow_td3}
\vspace{-4pt}
\setlength{\tabcolsep}{4pt}
\begin{tabular}{lll}
\toprule
Hyperparameter & Rainbow & TD3 \\
\midrule
\rowcolor{sb_green!10}
Discount factor & $0.99$ & $0.99$ \\
\rowcolor{sb_green!10}
Optimizer & Adam & Adam \\ 
\rowcolor{sb_red!20}
Learning Rate & $6.25 \cdot 10^{-5}$ & $10^{-3}$ \\
\rowcolor{sb_red!20}
Adam $\e$ & $1.5 \cdot 10^{-4}$ & $10^{-8}$ \\
\rowcolor{sb_green!10}
Replay buffer size & $1$M & $1$M \\
\rowcolor{sb_red!20}
Minibatch size & $32$ & $100$ \\
\rowcolor{sb_red!20}
Target network update & Iterative & EMA \\
\rowcolor{sb_red!20}
Effective target update freq.\ & $8$k & $200$ \\
\rowcolor{sb_red!20}
Initial random steps & $20$k & $1$k \\
\bottomrule
\end{tabular}
\end{wraptable}
as discrete versus continuous actions or vector versus pixel observations, with each category requiring its own set of algorithmic choices and hyperparameters. For example, Rainbow and TD3~\citep{hessel2018rainbow, fujimoto2018addressing}, common methods for Atari and MuJoCo respectively~\citep{bellemare2013arcade, mujoco}, have more differences than similarities in their shared hyperparameters~(\ref{table:rainbow_td3})---without accounting for further algorithmic differences.

To some extent, general-purpose algorithms do exist---policy gradient methods~\citep{williams1992reinforce, trpo, ppo} and many evolutionary approaches~\citep{rechenberg1978evolutionsstrategien, back1996evolutionary, rubinstein1997optimization, salimans2017evolution} require few assumptions on the underlying problem. Unfortunately, these methods often offer poor sample efficiency and asymptotic performance compared more domain-specific approaches, and in some instances, can require extensive re-tuning over numerous implementation-level details~\citep{Engstrom2020Implementation, shengyi2022the37implementation}. 

Recently, DreamerV3~\citep{hafner2023mastering} and TD-MPC2~\citep{hansen2024td}, have showcased the potential of general-purpose model-based approaches, achieving impressive single-task performance on a diverse set of benchmarks without re-tuning hyperparameters. However, despite their success, model-based methods also introduce substantial algorithmic and computational complexity, making them less practical than lightweight domain-specific model-free algorithms.

This paper presents a general model-free RL algorithm that leverages model-based representations to achieve the sample efficiency and performance of model-based methods, without the computational overhead. A recent surge of high-performing model-free RL algorithms with dynamics-based representations~\citep{guo2020bootstrap, guo2022byol, schwarzer2020data, schwarzer2023bigger, zhao2023simplified, fujimoto2024sale, zheng2024texttt, scannell2024iqrl} has showcased the potential of this family of algorithms when tailored for a single benchmark. Recognizing the similarity between these model-based and model-free approaches, our hypothesis is that the true benefit of model-based objectives is in the implicitly learned representation, rather than the model itself, and thus prompting the question: 
\begin{center}
    \textit{
    Can model-based representations alone enable sample-efficient general-purpose learning?
    }
\end{center}
Our proposed approach is based on learning features that approximately capture a linear relationship between state-action pairs and value. To do so, we draw heavily from modern dynamics-based representation learning methods (see \hyperref[sec:related_work]{Related Work}) as well as the work of \cite{parr2008analysis}, who show that both model-based and model-free objectives converge to the same solution in linear space. By mapping states and actions into a single, unified embedding, we eliminate any environment-specific characteristics of the input space and allow for a standardized set of hyperparameters.

We evaluate our method, MR.Q, on four widely used RL benchmarks and 118 environments, and achieve competitive performance against state-of-the-art domain-specific and general baselines without algorithmic or hyperparameter changes between environments or benchmarks. %

\vspace{-4pt}
\section{Related Work} \label{sec:related_work}
\vspace{-4pt}

\textbf{General-purpose RL.} 
Although many traditional RL methods are general-purpose in principle, practical constraints often force assumptions about the task domain. For example, algorithms like Q-learning and SARSA~\citep{watkins1989qlearning, rummery1994line} can be conceptually extended to continuous spaces, but are typically implemented using discrete lookup tables. In practice, early examples of general decision-making approaches can be found in on-policy methods with function approximation. For instance, both evolutionary algorithms~\citep{rechenberg1978evolutionsstrategien, back1996evolutionary, rubinstein1997optimization, salimans2017evolution} and policy gradient methods~\citep{williams1992reinforce, sutton1999policy, trpo, ppo} offer update rules with convergence guarantees and independence to the input space. However, despite their generality, these methods are also hindered by poor sample efficiency and are prone to local minima, limiting their suitability for many practical applications. 

In contrast, the design of deep RL algorithms tends to favor more specialized approaches that align closely with a single benchmark---e.g., DQN$\leftrightarrow$Atari~\citep{bellemare2013arcade,DQN}, DDPG$\leftrightarrow$MuJoCo~\citep{mujoco,DDPG}, or AlphaGo$\leftrightarrow$Go~\citep{alphago}. Generalizing beyond these initial benchmarks can often require significant engineering, tuning, or algorithmic discovery~\citep{luong2019applications, schrittwieser2020mastering, haydari2020deep,ibarz2021train}. In imitation learning, GATO achieved generalist behavior, but relied on large expert datasets~\citep{reed2022generalist}. 
Recently, DreamerV3~\citep{hafner2023mastering} demonstrated a strong capability over many benchmarks without re-tuning, but used costly large models and simulated rollouts. Our objective is to discover a lightweight model-free approach to general-purpose learning. 

\textbf{Dynamics-based representation learning.} Building representations from system dynamics is a long-standing approach for adaptation, partial observability, and feature selection~\citep{dayan1993improving, littman2001predictive, parr2008analysis}. Numerous model-free methods have been developed to learn representations by predicting future latent states~\citep{munk2016learning, van2016stable, zhang2018decoupling, gelada2019deepmdp, lee2020stochastic,  guo2020bootstrap, guo2022byol, schwarzer2020data, schwarzer2023bigger, zintgraf2021varibad, yu2021playvirtual, yu2022mask, fujimoto2021srdice, fujimoto2024sale, mcinroe2021learning, seo2022reinforcement, kim2022self, tang2023understanding, zhao2023simplified, zheng2024texttt, ni2024bridging, scannell2024iqrl}. Unsurprisingly, these model-free approaches closely relate to model-based counterparts which learn a latent dynamics model for planning or value estimation~\citep{watter2015embed, finn2016deep, karl2017deep, ha2018world, schrittwieser2020mastering, schrittwieser2021online, ye2021mastering, hansen2022temporal, hansen2024td, hafner2019learning, hafner2023mastering, wang2024efficientzero}. Our approach, MR.Q, is most closely related to the state-action representation learning in TD7~\citep{fujimoto2024sale}. At a high level, MR.Q differs from TD7 by discarding the original input and including losses over the reward and termination. MR.Q also differs significantly in implementation, drawing inspiration from prior work to determine a set of design choices that performs well across benchmarks, including multi-step returns, unrolled dynamics, and categorical losses. 

Our motivation also relates to linear MDPs~\citep{jin2020provably, agarwal2020flambe} and linear spectral representation~\citep{ren2022free, ren2023latent, zhang2022making, shribak2024diffusion}. The latter aims to learn a low-rank decomposition of the transition dynamics of the MDP and recover a linear relationship between an embedding and the value function. Similarly, our work connects to two-stage linear RL, where a non-linear embedding is learned for linear~RL~\citep{levine2017shallow,chung2018twotimescale}. 

\textbf{State abstraction.} Our work is closely related to bisimulation metrics~\citep{ferns2004metrics, ferns2011bisimulation, castro2020scalable} and MDP homomorphisms~\citep{ravindran2004algebraic, van2020plannable, van2020mdp, rezaei2022continuous} which rely on measures of similarity in reward and dynamics for state or action abstraction. These concepts have inspired practical approximations to bisimulation metrics as a means of shaping representations in deep RL agents, particularly those using image-based observations~\citep{zhang2020learning, castro2021mico, zang2022simsr}.

\vspace{-4pt}
\section{Background}
\vspace{-4pt}

Reinforcement learning (RL) problems are described by a Markov Decision Process~(MDP)~\citep{bellman1957markovian}, which we define by a tuple~$(S, A, p, R, \y)$ of state space~$S$, action space~$A$, dynamics function~$p$, reward function~$R$ and discount factor~$\y$. Value-based RL methods learn a value function~$Q^\pi(s,a) := \E_\pi[ \sum_{t=0}^\infty \y^t r_t | s_0 = s, a_0 = a]$ that models the expected discounted sum of rewards~$r_t \sim R(s_t,a_t)$ by following a policy~$\pi$ which maps states~$s$ to actions~$a$. 

The true value function~$Q^\pi$ is estimated by an approximate value function~$Q_\ta$. We use subscripts to indicate the network parameters~$\ta$. Target networks, which are used to introduce stationarity in prediction targets, have parameters denoted by an apostrophe, e.g., $Q_{\ta'}$. These parameters are periodically synchronized with the current network parameters ($\ta' \leftarrow \ta$).

\vspace{-4pt}
\section{Model-based Representations for Q-learning}
\vspace{-4pt}

This section presents the MR.Q algorithm (Model-based Representations for Q-learning), a model-free RL algorithm that learns an approximately linear representation of the value function through model-based objectives. Value-based RL algorithms learn a value function~$Q$ that maps state-action pairs~$(s,a)$ to values in $\mathbb{R}$ and a policy~$\pi$ that maps states~$s$ to actions~$a$. Like many representation learning methods for RL, MR.Q adds an initial step that transforms states and state-action pairs into embeddings~$\mathbf{z}_{s}$ and $\mathbf{z}_{sa}$, which serves as inputs to the downstream policy and value function. 
\begin{alignat}{2} \label{eqn:intermediate_representation}
    &f_\omega: s \rightarrow \mathbf{z}_s, 
    \qquad \qquad \qquad \qquad
    &&g_\omega: (s,a) \rightarrow \mathbf{z}_{sa}, \\
    &\pi_\phi: \mathbf{z}_s \rightarrow a,
    \qquad \qquad \qquad \qquad
    &&Q_\ta: \mathbf{z}_{sa} \rightarrow \mathbb{R}. 
\end{alignat}
While neither the value function nor policy require explicit representation learning, using intermediate embeddings has two main benefits: 
\begin{enumerate}[nosep]
    \item Introducing an explicit representation learning stage can enable richer alternative learning signals that are grounded in the dynamics and rewards of the MDP, as opposed to relying exclusively on non-stationary value targets used in both value and policy learning.
    \item Representation learning can transform the input into a unified, abstract space that is decoupled from the original input characteristics, e.g., images or action spaces. This abstraction allows us to filter irrelevant or spurious details and use unified downstream architectures, improving robustness to environment variations.  
\end{enumerate}
To learn these embeddings, we draw inspiration from linear feature selection, revisiting the work of \cite{parr2008analysis}, as well as MDP homomorphisms~\citep{ravindran2002model}. In \cref{sec:theory} we highlight how model-based objectives can be used to learn features that share an approximately linear relationship with the true value function. Then in \cref{sec:algorithm}, we relax our theoretical motivation for a practical algorithm based on recent advances in dynamics-based representation learning.

\subsection{Theoretical Motivation} \label{sec:theory}

Consider a linear decomposition of the value function, where the value function $Q(s,a)$ is represented by features~$\mathbf{z}_{sa}$ and linear weights~$\mathbf{w}$: 
\begin{equation} \label{eqn:linear_Q}
    Q(s,a) = \mathbf{z}_{sa}^\top \mathbf{w}.
\end{equation}
Our primary objective is to learn features~$\mathbf{z}_{sa}$ that share an approximately linear relationship with the true value function~$Q^\pi$. However, since this relationship is only approximate, we use these features as input to a non-linear function~$\hat Q(\mathbf{z}_{sa})$, rather than relying solely on linear function approximation.

We start by exploring how to find features that can linearly represent the true value function. Given a dataset~$D$ of tuples~$(s,a,r,s',a')$, we consider two possible approaches for learning a value function~$Q$: 
A model-free update based on semi-gradient TD~\citep{sutton1988tdlearning, suttonbarto}: 
\begin{equation} \label{eqn:semi_gradient_TD}
    \mathbf{w} \leftarrow \mathbf{w} - \al \E_D 
    \lb \nabla_{\mathbf{w}} \lp \mathbf{z}_{sa}^\top \mathbf{w} - |r + \y \mathbf{z}_{s'a'}^\top \mathbf{w}|_\text{sg} \rp^2 \rb.
\end{equation}
A model-based approach to learn $\mathbf{w}_\text{mb}$, based on rolling out estimates of the dynamics and reward: 
\begin{align} 
    &\mathbf{w}_\text{mb} := \sum_{t=0}^\infty \y^t W_p^t \mathbf{w}_r, %
    \label{eqn:linear_model_based} \\
    &\mathbf{w}_r := \argmin_{\mathbf{w}} \E_D \lb \lp \mathbf{z}_{sa}^\top \mathbf{w} - r \rp^2 \rb,
    \qquad
    W_p := \argmin_{W} \E_D \lb \lp \mathbf{z}_{sa}^\top W - \mathbf{z}_{s'a'} \rp^2 \rb.
    \label{eqn:wp_wr}
\end{align}
Closely following \cite{parr2008analysis} and \cite{song2016linear}, we can show that these approaches converge to the same solution (proofs for this section can be found in \cref{appendix:proofs}). 
\begin{theorem} \label{thm:equal}
    The fixed point of the model-free approach~(\ref{eqn:semi_gradient_TD}) and the solution of the model-based approach~(\ref{eqn:linear_model_based}) are the same.
\end{theorem}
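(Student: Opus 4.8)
The plan is to reduce both quantities in the statement to the same closed form in terms of a few data-dependent matrices, and then equate them with elementary linear algebra. Throughout I would assume, as is standard in this line of work \citep{parr2008analysis}, that the feature second-moment matrix $\Sigma := \E_D[\mathbf{z}_{sa}\mathbf{z}_{sa}^\top]$ is invertible, so the least-squares problems in \eqref{eqn:wp_wr} have unique solutions, and I would also write $C := \E_D[\mathbf{z}_{sa}\mathbf{z}_{s'a'}^\top]$ and $\mathbf{b} := \E_D[\mathbf{z}_{sa} r]$.

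First, for the model-free side: the expected semi-gradient driving \eqref{eqn:semi_gradient_TD} (differentiating through $\mathbf{z}_{sa}^\top\mathbf{w}$ only, per the stop-gradient) is proportional to $\E_D[\mathbf{z}_{sa}(\mathbf{z}_{sa}^\top\mathbf{w} - r - \y\,\mathbf{z}_{s'a'}^\top\mathbf{w})] = (\Sigma - \y C)\mathbf{w} - \mathbf{b}$, so the TD fixed point is $\mathbf{w}_{\mathrm{TD}} = (\Sigma - \y C)^{-1}\mathbf{b}$, assuming $\Sigma - \y C$ is nonsingular. Next, for the model-based side: the two least-squares problems in \eqref{eqn:wp_wr} are solved by $\mathbf{w}_r = \Sigma^{-1}\mathbf{b}$ and $W_p = \Sigma^{-1}C$, and the geometric series in \eqref{eqn:linear_model_based} sums to $\mathbf{w}_{\mathrm{mb}} = (I - \y W_p)^{-1}\mathbf{w}_r$ whenever it converges.

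The key step is then to factor $I - \y W_p = I - \y\Sigma^{-1}C = \Sigma^{-1}(\Sigma - \y C)$, whence $(I - \y W_p)^{-1} = (\Sigma - \y C)^{-1}\Sigma$, and therefore $\mathbf{w}_{\mathrm{mb}} = (\Sigma - \y C)^{-1}\Sigma\,\Sigma^{-1}\mathbf{b} = (\Sigma - \y C)^{-1}\mathbf{b} = \mathbf{w}_{\mathrm{TD}}$, which is the claim.

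The algebra is routine; the only delicate point is bookkeeping the regularity conditions so that every object above exists. I need $\Sigma$ invertible for $\mathbf{w}_r$, $W_p$, and the factorization; and I need the Neumann series $\sum_{t\ge 0}\y^t W_p^t$ to converge, i.e.\ $\y$ times the spectral radius of $W_p$ below $1$ --- equivalently, $I - \y W_p$ nonsingular, which under invertible $\Sigma$ is exactly the condition that $\Sigma - \y C$ be nonsingular, i.e.\ the same condition needed for the model-free fixed point to be well-defined. I would state these hypotheses explicitly at the start of the proof and note that they are precisely the standard assumptions under which TD with linear function approximation admits a unique fixed point, so the two existence requirements coincide.
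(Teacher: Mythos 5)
Your proposal is correct and follows essentially the same route as the paper's proof: your $\Sigma$, $C$, $\mathbf{b}$ are exactly the paper's $Z^\top Z$, $Z^\top Z'$, $Z^\top R$, the TD fixed point becomes $(\Sigma-\y C)^{-1}\mathbf{b}$, and the factorization $I-\y W_p = \Sigma^{-1}(\Sigma-\y C)$ is the same algebraic step the paper uses to collapse $(I-\y W_p)^{-1}\mathbf{w}_r$ to the same expression. The only difference is that you make explicit the invertibility and Neumann-series convergence assumptions that the paper leaves implicit, which is a welcome refinement rather than a departure.
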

From the insight of \ref{thm:equal}, we can connect the value error~VE, the difference between an approximate value function~$Q$ and the true value function~$Q^\pi$,
\begin{equation}
    \text{VE}(s,a) := Q(s,a) - Q^\pi(s,a) 
\end{equation}
to the accuracy of reward and dynamics components of the estimated model (\ref{thm:bound}).
\begin{theorem} \label{thm:bound} The value error of the solution described by \ref{thm:equal} is bounded by the accuracy of the estimated dynamics and reward:
\begin{equation}
    |\textnormal{VE}(s,a)| \leq \frac{1}{1 - \y} \max_{(s,a) \in S \times A} \lp | \mathbf{z}_{sa}^\top \mathbf{w}_r - \E_{r|s,a}[r] | + \max_i | \mathbf{w}_i | \sum | \mathbf{z}_{sa}^\top W_p - \E_{s',a'|s,a} [ \mathbf{z}_{s'a'} ] | \rp.
\end{equation}  
\end{theorem}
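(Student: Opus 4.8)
The plan is to recognize the common fixed point from \cref{thm:equal} as the exact value function of an \emph{approximate linear model}, and then carry out a standard $\|\cdot\|_\infty$ contraction argument (in the spirit of the simulation lemma) comparing this model's value to $Q^\pi$.

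First I would use \cref{thm:equal} to write the shared solution as $Q(s,a) = \mathbf{z}_{sa}^\top \mathbf{w}_\text{mb}$ with $\mathbf{w}_\text{mb} = \sum_{t=0}^\infty \y^t W_p^t \mathbf{w}_r$ (so the $\mathbf{w}$ in the statement is $\mathbf{w}_\text{mb}$). Splitting off the $t=0$ term of this geometric series gives the fixed-point identity $\mathbf{w}_\text{mb} = \mathbf{w}_r + \y\, W_p\, \mathbf{w}_\text{mb}$, which requires only that the series converges---something already implicit in the definition of $\mathbf{w}_\text{mb}$. Left-multiplying by $\mathbf{z}_{sa}^\top$ turns this into a Bellman-type equation for the model value, $Q(s,a) = \mathbf{z}_{sa}^\top \mathbf{w}_r + \y\, (\mathbf{z}_{sa}^\top W_p)\, \mathbf{w}_\text{mb}$, in which $\mathbf{z}_{sa}^\top \mathbf{w}_r$ is the predicted reward and $\mathbf{z}_{sa}^\top W_p$ the predicted expected successor feature.

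Next I would subtract the true Bellman equation $Q^\pi(s,a) = \E_{r|s,a}[r] + \y\, \E_{s',a'|s,a}[Q^\pi(s',a')]$ (with $a' \sim \pi$, matching the data tuples in $D$) and insert $\pm\, \y\, \E_{s',a'|s,a}[\mathbf{z}_{s'a'}^\top \mathbf{w}_\text{mb}] = \pm\, \y\, \E_{s',a'|s,a}[Q(s',a')]$, using $Q(s',a') = \mathbf{z}_{s'a'}^\top \mathbf{w}_\text{mb}$ at the successor pair. This decomposes the value error as
\begin{equation}
\textnormal{VE}(s,a) = \lp \mathbf{z}_{sa}^\top \mathbf{w}_r - \E_{r|s,a}[r] \rp + \y\, \lp \mathbf{z}_{sa}^\top W_p - \E_{s',a'|s,a}[\mathbf{z}_{s'a'}] \rp \mathbf{w}_\text{mb} + \y\, \E_{s',a'|s,a}[\textnormal{VE}(s',a')].
\end{equation}
I would then bound the middle term by Hölder's inequality with the $\ell_1$/$\ell_\infty$ pairing, giving $\max_i |\mathbf{w}_i| \sum | \mathbf{z}_{sa}^\top W_p - \E_{s',a'|s,a}[\mathbf{z}_{s'a'}] |$, and bound the recursive term by $\y \max_{(s',a')} |\textnormal{VE}(s',a')|$.

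Finally, taking the maximum over $(s,a) \in S \times A$ on both sides, collecting the $\y\max|\textnormal{VE}|$ term on the left, dividing by $1 - \y$, and using $\y \le 1$ to drop the leftover discount factor on the dynamics term yields exactly the stated bound. The argument is largely bookkeeping: the only points needing care are the well-definedness of $\mathbf{w}_\text{mb}$ (convergence of the series, i.e.\ spectral radius of $\y W_p$ below one), keeping the $a'$-expectation under $\pi$ so it lines up with the Bellman equation for $Q^\pi$, and the clean add-and-subtract of $\E[\mathbf{z}_{s'a'}^\top \mathbf{w}_\text{mb}]$ that produces the recursion. I expect that add-and-subtract step to be the only mild obstacle; everything after it is a routine contraction.
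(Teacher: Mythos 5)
Your proposal is correct and follows essentially the same route as the paper: the same fixed-point identity $\mathbf{w} = \mathbf{w}_r + \y W_p \mathbf{w}$, the same add-and-subtract decomposition of $\textnormal{VE}(s,a)$ into reward error, feature-prediction error times $\mathbf{w}$, and a discounted expected next-step $\textnormal{VE}$, followed by the same H\"older ($\ell_1$/$\ell_\infty$) bound and dropping of the leftover $\y$. The only difference is cosmetic: you close the recursion with a sup-norm contraction argument (collect $\y\max|\textnormal{VE}|$ on the left and divide by $1-\y$), whereas the paper unrolls it over the discounted state-action visitation distribution $p^\pi$, which yields the same $\tfrac{1}{1-\y}$ factor.
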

\cite{parr2008analysis} and \cite{song2016linear} use a related insight regarding the Bellman error to infer an approach for feature selection. However, with the advent of deep learning, we can instead directly learn the features~$\mathbf{z}_{sa}$ by jointly optimizing them alongside the linear weights~$\mathbf{w}_r$ and $W_p$. This is accomplished by treating the features and linear weights as a unified end-to-end model and balancing the losses in \ref{eqn:wp_wr} with a hyperparameter~$\lambda$:
\begin{equation} \label{eqn:linear_features_loss}
    \Loss(\mathbf{z}_{sa}, \mathbf{w}_r, W_p) = \underbracket[\fontdimen8\textfont3]{\E_D \lb \lp \mathbf{z}_{sa}^\top \mathbf{w}_r - r \rp^2 \rb}_\text{Reward learning} + 
    \underbracket[\fontdimen8\textfont3]{\lambda \E_D \lb \lp \mathbf{z}_{sa}^\top W_p - \mathbf{z}_{s'a'} \rp^2 \rb}_\text{Dynamics learning}. 
\end{equation}
However, the resulting \ref{eqn:linear_features_loss} has some notable drawbacks. 

\textbf{Dependency on $\pi$.} The dynamics target $\mathbf{z}_{s'a'}$ depends on an action~$a'$ determined by the policy~$\pi$. In policy optimization problems, this introduces non-stationarity, where the target embedding must be continually updated to reflect changes in the policy. This creates an undesirable interdependence between the policy and encoder. 

\textbf{Undesirable local minima.} Jointly optimizing both the features~$\mathbf{z}_{sa}$ and the dynamics target can lead to undesirable local minima, similar to the issues encountered with Bellman residual minimization~\citep{baird1995residual, fujimoto2022should}. This can result in collapsed or trivial solutions when the dataset does not fully cover the state and action space or when the reward is sparse. 

To address these issues, we suggest relaxations on our proposed, theoretically grounded approach: 
\begin{equation} \label{eqn:new_feature_loss}
    \Loss(\mathbf{z}_{sa}, \mathbf{w}_r, W_p) = \E_D \lb \lp \mathbf{z}_{sa}^\top \mathbf{w}_r - r \rp^2 \rb + \lambda \E_D \Big[ \Big( \mathbf{z}_{sa}^\top W_p - \underbracket[\fontdimen8\textfont3]{\bar{\mathbf{z}}_{s'}}_{\makebox[0pt]{\scriptsize Adjustment}} \Big)^2 \Big]. 
\end{equation}
We propose two key modifications to alleviate the aforementioned issues. Firstly, we use a state-dependent embedding~$\mathbf{z}_{s'}$ as the dynamics target, rather than the state-action embedding~$\mathbf{z}_{s'a'}$. This eliminates any dependency on the current policy while still capturing the environment's dynamics. 

Secondly, to mitigate the issue of local minima, we use a target network~$f_{\omega'}(s')$ to generate the dynamics target~$\bar{\mathbf{z}}_{s'}$, where the parameters~$\omega'$ are periodically updated to track the current network parameters~$\omega$. Empirical evidence from prior work suggests that this approach can yield significant performance gains~(\cite{grill2020bootstrap, assran2023self}, see \hyperref[sec:related_work]{Related Work}), although it no longer guarantees convergence to a fixed point.

Due to these two changes, even if the modified objective defined by \ref{eqn:new_feature_loss} is minimized, we can no longer assume there is a \textit{linear} relationship between the embedding~$\mathbf{z}_{sa}$ and the value function. However, we can instead allow for a \textit{non-linear} relationship, replacing linear weights $\mathbf{w}$ with a non-linear function $\hat Q(\mathbf{z}_{sa})$. We can show that this relationship exists as long as the features are sufficiently rich~(i.e., such that a MDP homomorphism is satisfied~\citep{ravindran2002model}).
\begin{theorem} \label{thm:homomorphism}
Given functions $f(s)=\mathbf{z}_s$ and $g(\mathbf{z}_s, a)=\mathbf{z}_{sa}$, then if there exists functions $\hat p$ and $\hat R$ such that for all $(s,a) \in S \times A$: 
\begin{align} \label{eqn:optimal_pi}
    &\E_{\hat R} [ \hat R(\mathbf{z}_{sa}) ] = \E_R \lb R(s,a) \rb, 
    &\hat p(\mathbf{z}_{s'}| \mathbf{z}_{sa}) = \sum_{\hat s : \mathbf{z}_{\hat s}=\mathbf{z}_{s'}} p(\hat s|s, a),
\end{align}
then for any policy~$\pi$ where there exists a corresponding policy~$\hat \pi(a | \mathbf{z}_s) = \pi(a | s)$, there exists a function~$\hat Q$ equal to the true value function $Q^\pi$ over all possible state-action pairs~$(s,a) \in S \times A$:
    \begin{equation}
    \hat Q(\mathbf{z}_{sa}) = Q^\pi(s,a).    
    \end{equation} 
Furthermore, \ref{eqn:optimal_pi} guarantees the existence of an optimal policy~$\hat \pi^*(a | \mathbf{z}_s) = \pi^*(a|s)$. 
\end{theorem}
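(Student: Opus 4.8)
The plan is to construct $\hat Q$ as the unique fixed point of the policy-evaluation Bellman operator of the \emph{abstract} MDP $(\mathcal Z, A, \hat p, \hat R, \y)$ induced by $f$, $g$ and the given $\hat p, \hat R$, and then to show that the pulled-back function $(s,a)\mapsto \hat Q(\mathbf z_{sa})$ satisfies the policy-evaluation Bellman equation of the original MDP for $\pi$. Since that equation has $Q^\pi$ as its unique solution, the two functions must coincide. The ``furthermore'' claim is obtained by the same argument with the Bellman \emph{optimality} operators replacing the evaluation operators.

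Concretely, I first define, on bounded functions $q$ of $\mathbf z_{sa}$, the operator
\[
(\hat{\mathcal T}^{\hat\pi} q)(\mathbf z_{sa}) = \E_{\hat R}[\hat R(\mathbf z_{sa})] + \y\, \E_{\mathbf z_{s'} \sim \hat p(\cdot\mid \mathbf z_{sa})}\, \E_{a' \sim \hat\pi(\cdot\mid \mathbf z_{s'})}\big[\, q(g(\mathbf z_{s'}, a'))\,\big],
\]
which, assuming bounded rewards, is a $\y$-contraction in the sup-norm and hence has a unique fixed point $\hat Q$. The engine of the proof is the observation that, for every bounded $h$ on the abstract state space, the second hypothesis of \ref{eqn:optimal_pi} gives
\[
\E_{\mathbf z_{s'}\sim \hat p(\cdot\mid \mathbf z_{sa})}[h(\mathbf z_{s'})] = \sum_{z'}\Big(\sum_{\hat s:\,\mathbf z_{\hat s}=z'} p(\hat s\mid s,a)\Big)h(z') = \sum_{\hat s} p(\hat s\mid s,a)\, h(\mathbf z_{\hat s}) = \E_{s'\sim p(\cdot\mid s,a)}[h(\mathbf z_{s'})],
\]
so an abstract-dynamics expectation of a function of $\mathbf z_{s'}$ equals the original-dynamics expectation of the same function composed with $f$. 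I apply this with $h(\mathbf z_{s'}):=\E_{a'\sim\hat\pi(\cdot\mid \mathbf z_{s'})}[\hat Q(g(\mathbf z_{s'},a'))]$, which is a function of $\mathbf z_{s'}$ alone because both $\hat\pi$ and $g$ act through $\mathbf z_{s'}$.

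Setting $V(s,a):=\hat Q(\mathbf z_{sa})$ and combining the fixed-point identity $\hat Q=\hat{\mathcal T}^{\hat\pi}\hat Q$ with the reward hypothesis $\E_{\hat R}[\hat R(\mathbf z_{sa})]=\E_R[R(s,a)]$, the displayed pushforward identity, and the policy correspondence $\hat\pi(a'\mid \mathbf z_{s'})=\pi(a'\mid s')$, I obtain
\[
V(s,a) = \E_R[R(s,a)] + \y\, \E_{s'\sim p(\cdot\mid s,a)}\, \E_{a'\sim\pi(\cdot\mid s')}[V(s',a')],
\]
which is exactly the Bellman evaluation equation defining $Q^\pi$. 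As that operator is a $\y$-contraction with unique fixed point $Q^\pi$, we conclude $V\equiv Q^\pi$, i.e.\ $\hat Q(\mathbf z_{sa})=Q^\pi(s,a)$ for all $(s,a)\in S\times A$. For the last claim, I rerun the same steps with $\max_{a'}(\cdot)$ replacing $\E_{a'\sim\hat\pi}[\cdot]$ throughout: the pushforward identity is then used with $h(\mathbf z_{s'})=\max_{a'}\hat Q^*(g(\mathbf z_{s'},a'))$, again a function of $\mathbf z_{s'}$, yielding $\hat Q^*(\mathbf z_{sa})=Q^*(s,a)$. Hence any action maximizing $\hat Q^*(g(\mathbf z_s,\cdot))$ also maximizes $Q^*(s,\cdot)$, so the abstract greedy policy $\hat\pi^*$ is optimal in the original MDP; it is well defined as a function of $\mathbf z_s$ because states with equal embeddings have identical rows $Q^*(s,\cdot)$ and ties can be broken identically, giving $\hat\pi^*(a\mid \mathbf z_s)=\pi^*(a\mid s)$.

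The step I expect to require the most care is not any single calculation but verifying the applicability of the pushforward identity: every quantity over which an abstract-dynamics expectation is taken must depend on $\mathbf z_{s'}$ only, never on $s'$ directly --- which is precisely what the hypotheses are there to supply ($g$ factoring through $f$, so $\mathbf z_{sa}=g(\mathbf z_s,a)$, and $\hat R,\hat p,\hat\pi$ being well defined on the quotient). A minor technical prerequisite is boundedness of the rewards (or an analogous restriction of the function class) so that all four Bellman operators involved are genuine contractions with the unique fixed points used above.
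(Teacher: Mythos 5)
Your proof is correct, but it takes a different route from the paper's. The paper argues by induction on the horizon: it defines the truncated values $Q^\pi_h(s,a)$ and $\hat Q_h(\mathbf{z}_{sa})$, shows $Q^\pi_0 = \hat Q_0$ from the reward condition, propagates the identity $Q^\pi_{n-1}(s,a)=\hat Q_{n-1}(\mathbf{z}_{sa})$ one step using exactly your marginalization step (rewriting $\sum_{s'}\sum_{a'} p(s'|s,a)\,\pi(a'|s')\,\hat Q_{n-1}(\mathbf{z}_{s'a'})$ as $\sum_{\mathbf{z}_{s'}}\sum_{a'}\hat p(\mathbf{z}_{s'}|\mathbf{z}_{sa})\,\hat\pi(a'|\mathbf{z}_{s'})\,\hat Q_{n-1}(\mathbf{z}_{s'a'})$), and then defines $\hat Q$ as $\lim_{n\to\infty}\hat Q_n$, noting each $\hat Q_n$ depends only on $\hat p$, $\hat R$, $\hat\pi$; the optimality claim is handled by rerunning the induction with $\max_{a'}$ in place of the expectation, just as you do. You instead construct $\hat Q$ as the fixed point of the abstract Bellman evaluation (resp.\ optimality) operator and conclude by uniqueness of the fixed point of the original Bellman operator, with the same pushforward identity as the engine. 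The trade-off: your contraction argument settles existence and uniqueness of $\hat Q$ cleanly and makes the logical structure (pullback satisfies the original Bellman equation, hence equals $Q^\pi$) very transparent, at the price of assuming bounded rewards and a sup-norm setting; the paper's finite-horizon induction is more elementary and sidesteps the contraction machinery, though its limit/existence step is stated more tersely. Your explicit remark that states with equal embeddings have identical rows $Q^*(s,\cdot)$, which makes the greedy policy well defined on the quotient, is a point the paper's proof leaves implicit.
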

Consequently, even if the features~$\mathbf{z}_{sa}$ do not linearly represent the true value function, i.e., the loss in \ref{eqn:linear_features_loss} cannot be not exactly minimized, $\mathbf{z}_{sa}$ can still be used in a non-linear relationship to represent the value function. Furthermore, \ref{thm:homomorphism} outlines a similar objective as the original linear objective defined in \ref{eqn:linear_features_loss}, in learning the reward and dynamics of the MDP. 

These results motivates the practical algorithm discussed in the following section. Using the adjusted loss defined in \ref{eqn:new_feature_loss}, we will aim to learn features with an approximately linear relationship to the true value function, but use a non-linear value function with those features to account for the error induced by our approximations.

\subsection{Algorithm} \label{sec:algorithm}

We now present the details of MR.Q (Model-based Representations for Q-learning). Building on the insights from the previous section, our key idea is to learn a state-action embedding~$\mathbf{z}_{sa}$ that is approximately linear with the true value function~$Q^\pi$. To account for approximation errors, these features are used with \textit{non-linear} function approximation to determine the value. 

The state embedding vector~$\mathbf{z}_s$ is obtained as an intermediate component by training end-to-end with the state-action encoder. MR.Q handles different input modalities by swapping the architecture of the state encoder. Since $\mathbf{z}_s$ is a vector, the remaining networks are independent of the observation space and use feedforward networks. 

Given the transition $(s,a,r,d,s')$ from the replay buffer:  %

\begin{minipage}{0.49\textwidth}
\centering
\addtolength{\parskip}{-8pt}
\setlength\extrarowheight{2pt}
\begin{tabular}{ll}
\toprule
\textbf{Output} MR.Q \\ 
\midrule
\rowcolor{sb_gray!20}
\multicolumn{2}{c}{\textcolor{gray}{Trained end-to-end}} \\
State Encoder & $\mathbf{z}_s = f_\omega(s)$ \\
State-Action Encoder & $\mathbf{z}_{sa} = g_\omega(\mathbf{z}_s,a)$ \\
MDP predictor & $\tilde{\mathbf{z}}_{s'}, \tilde r, \tilde d = \mathbf{z}_{sa}^\top \mathbf{m}$ \\
\rowcolor{sb_gray!10}
\multicolumn{2}{c}{\textcolor{gray}{Decoupled RL}} \\ 
Value & $\tilde Q_i = Q_\ta(\mathbf{z}_{sa})$ \\
Policy & $a_\pi = \pi_\phi(\mathbf{z}_s)$
\\ \bottomrule
\end{tabular}
\end{minipage}
\begin{minipage}{0.49\textwidth}
\centering
\setlength\extrarowheight{2pt}
\begin{tabular}{l}
\toprule
\textbf{Update} MR.Q \\
\midrule
\textbf{if} $t~\%~T_\text{target} = 0$ \textbf{then} \\
\qquad Target networks: $\ta', \phi', \omega' \leftarrow \ta, \phi, \omega$. \\
\qquad Reward scaling: $\bar{r}' \leftarrow \bar r$, $\bar r \leftarrow \mathrm{mean}_D r$. \\
\rowcolor{sb_gray!20}
\qquad \textbf{for} $T_\text{target}$ time steps \textbf{do} \\
\rowcolor{sb_gray!20}
\qquad\qquad Encoder update: \ref{eqn:encoder_full_loss}. \\
\rowcolor{sb_gray!10}
Value update: \ref{eqn:value_loss}. \\
\rowcolor{sb_gray!10}
Policy update: \ref{eqn:policy_loss}. 
\\ \bottomrule
\end{tabular}
\end{minipage}

The encoder loss is composed of three terms based on the reward, dynamics and terminal signal that are unrolled over a short horizon. The value function and policy are trained independently, using standard losses~\citep{DPG, fujimoto2018addressing}. We use LAP~\citep{fujimoto2020equivalence} to sample transitions with priority according to their TD errors~\citep{PrioritizedExpReplay}, the absolute difference between the predicted value and the target value in \ref{eqn:value_loss}. 

The target network, reward scaling (defined in \ref{eqn:value_loss}), and the encoder are updated periodically every $T_\text{target}$ time steps. This synchronized update schedule keeps the input and target output fixed for the downstream value function and policy within each iteration, thus reducing non-stationarity in the optimization~\citep{fujimoto2024sale}. 

\subsubsection{Encoder}

The encoder loss is based on unrolling the dynamics of the learned model over a short horizon. Given a subsequence of an episode $(s_0, a_0, r_1, d_1, s_1,..., r_{H_\text{Enc}}, d_{H_\text{Enc}}, s_{H_\text{Enc}})$, the model is unrolled by encoding the initial state~$s_0$, then by repeatedly applying the state-action encoder~$g_\omega$ and linear MDP predictor~$\mathbf{m}$:
\begin{equation}
     \tilde{\mathbf{z}}^t, \tilde{r}^t, \tilde{d}^t := g_\omega(\tilde{\mathbf{z}}^{t-1}, a^{t-1})^\top \mathbf{m}, \quad \text{ where } \tilde{\mathbf{z}}^0 := f_\omega(s_0). 
\end{equation}
The final loss is summed over the unrolled model and balanced by corresponding hyperparameters: 
\begin{equation} \label{eqn:encoder_full_loss}
    \Loss_\text{Encoder}(f,g,\mathbf{m}) := \sum_{t=1}^{H_\text{Enc}} \lambda_\text{Reward} \Loss_\text{Reward}(\tilde r^t) 
    + \lambda_\text{Dynamics} \Loss_\text{Dynamics}(\tilde{\mathbf{z}}^t_{s'}) 
    + \lambda_\text{Terminal} \Loss_\text{Terminal}(\tilde d^t).
\end{equation}
$\lambda_\text{Terminal}$ is set to $0$ until the first terminal transition (i.e., $d=0$) is viewed. This approach is commonly used in model-based RL~\citep{oh2015action, hafner2023mastering, hansen2024td}, as well as dynamics-based representation learning~\citep{schwarzer2020data, schwarzer2023bigger, scannell2024iqrl}. 

\textbf{Reward loss.} While our theoretical analysis suggests using the mean-squared error to train the predicted reward, we find that a categorical representation of the reward is more effective in practice for predicting sparse rewards and is robust to reward magnitude. This empirical benefit is consistent with prior work~\citep{schrittwieser2020mastering, hafner2023mastering, hansen2024td, wang2024efficientzero}. Our reward loss function uses the cross entropy~CE between the predicted reward~$\tilde r$ and a two-hot encoding of the reward~$r$: 
\begin{equation} \label{eqn:reward_loss}
    \Loss_\text{Reward}(\tilde r) := \text{CE} \lp \tilde r, \text{Two-Hot}(r) \rp. 
\end{equation}
To handle a wide range of reward magnitudes without prior knowledge, the locations of the two-hot encoding are spaced at increasing non-uniform intervals, according to $\mathrm{symexp}(x) = \sign(x) (\exp{(x)} - 1)$~\citep{hafner2023mastering}. %

\textbf{Dynamics loss.} The dynamics loss minimizes the mean-squared error between the predicted next state embedding~$\tilde{\mathbf{z}}_{s'}$ and the next state embedding~$\bar{\mathbf{z}}_{s'}$ from the target encoder~$f_{\omega'}$:
\begin{equation} \label{eqn:dynamics_loss}
    \Loss_\text{Dynamics}(\tilde{\mathbf{z}}_{s'}) := \lp \tilde{\mathbf{z}}_{s'} - \bar{\mathbf{z}}_{s'} \rp^2.
\end{equation}
As discussed in the previous section, using the next state embedding~$\mathbf{z}_{s'}$ eliminates the dependency on the policy that would occur when using a state-action embedding target. 

\textbf{Terminal loss.} 
The predicted scalar terminal signal~$\tilde d$ is trained simply using a MSE loss with the binary terminal signal~$d$: 
\begin{equation} \label{eqn:terminal_loss}
    \Loss_\text{Terminal}(\tilde d) := ( \tilde d - d)^2.
\end{equation}

\subsubsection{Value Function}

Value learning is primarily based on TD3~\citep{fujimoto2018addressing}. Specifically, we train two value functions and take the minimum output between their respective target networks to determine the value target. Similar to TD3, the target action is determined by the target policy~$\pi_{\phi'}$, perturbed by small amount of clipped Gaussian noise: 
\begin{equation} \label{eqn:target_action_noise}
a_{\pi} = \begin{cases}
    \argmax a' & \text{for discrete } A, \\
    \clip(a', -1,1) & \text{for continuous } A,
\end{cases}
\quad \text{where } a' = \pi_{\phi'}(s') + \clip(\e, -c, c), \quad \e \sim \mathcal{N}(0, \sigma^2). 
\end{equation}
Discrete actions are represented by a one-hot encoding, where the Gaussian noise is added to each dimension. Action noise and the clipping is scaled according the range of the action space. 

We modify the TD3 loss in a few ways. Firstly, following numerous prior work across benchmarks~\citep{hessel2018rainbow, barth-maron2018distributional, yarats2022mastering, schwarzer2023bigger}, we predict multi-step returns over a horizon~$H_Q$. Secondly, we use the Huber loss instead of mean-squared error to eliminate bias from prioritized sampling~\citep{fujimoto2020equivalence}. Finally, the target value is normalized according to the average absolute reward~$\bar r$ in the replay buffer: 
\begin{align} \label{eqn:value_loss}
    &\Loss_\text{Value}(\tilde Q_i) := \text{Huber} \lp \tilde Q_i, \frac{1}{\bar r} \lp \sum_{t=0}^{H_Q-1} \y^t r_t + \y^{H_Q} \tilde{Q}'_j\rp \rp, 
    &\tilde{Q}'_j := {\bar r}' \min_{j=1,2} Q_{\ta'_j}(\mathbf{z}_{s_{H_Q} a_{H_Q, \pi}}).
\end{align}
The value ${\bar r}'$ captures the \textit{target} average absolute reward, which is the scaling factor used to the most recently copied value functions~$Q_{\ta'_j}$. This value is updated simultaneously with the target networks \mbox{${\bar r}' \leftarrow \bar r$}. Maintaining a consistent reward scale keeps the loss magnitude constant across different benchmarks, thus improving the robustness of a single set of hyperparameters.

\subsubsection{Policy}

For both continuous and discrete action spaces, the policy is updated using the deterministic policy gradient~\citep{DPG}:
\begin{equation} \label{eqn:policy_loss}
\Loss_\text{Policy}(a_\pi) := - 0.5 \sum_{i=\{1,2\}} \tilde Q_i(\mathbf{z}_{sa_\pi}) + \lambda_\text{pre-activ}\mathbf{z}_\pi^2, \quad\text{where } a_\pi = \text{activ}(\mathbf{z}_\pi). 
\end{equation}
To make the loss universal between action spaces, we use Gumbel-Softmax~\citep{jang2017categorical, lowe2017multi, cianflone2019discrete} for discrete actions, and Tanh for continuous actions. A small regularization penalty is added to the square of the pre-activations~$\mathbf{z}_\pi$ before the policy's final activation to help avoid local minima when the reward, and value, is sparse~\citep{bjorck2021high}. 

For exploration, Gaussian noise is added to each dimension of the action (or one-hot encoding of the action). Similar to \ref{eqn:target_action_noise}, the resulting action vector is clipped to the range of the action space for continuous actions. For discrete actions, the final action is determined by the $\argmax$ operation. 

\section{Experiments}

\begin{figure}[t]
    \centering
    \begin{tikzpicture}[trim axis right]
    \begin{axis}[
        height=0.1615\textwidth,
        width=0.19\textwidth,
        title={
        \shortstack{
        Gym - Locomotion\vphantom{p}\\
        \textcolor{gray}{5 tasks}\vphantom{p}
        }
        },
        ylabel={TD3-Normalized},
        xlabel={Time Steps (1M)},
        xtick={0.0, 40, 80, 120, 160, 200},
        ytick={0.0, 0.3, 0.6, 0.9, 1.2, 1.5},
        xticklabels={0.0, 0.2, 0.4, 0.6, 0.8, 1.0},
        yticklabels={0.0, 0.3, 0.6, 0.9, 1.2, 1.5},
    ]
    \addplot graphics [
    ymin=-0.06798028118092571, ymax=1.6825077195022538,
    xmin=-10.0, xmax=210.0,
    ]{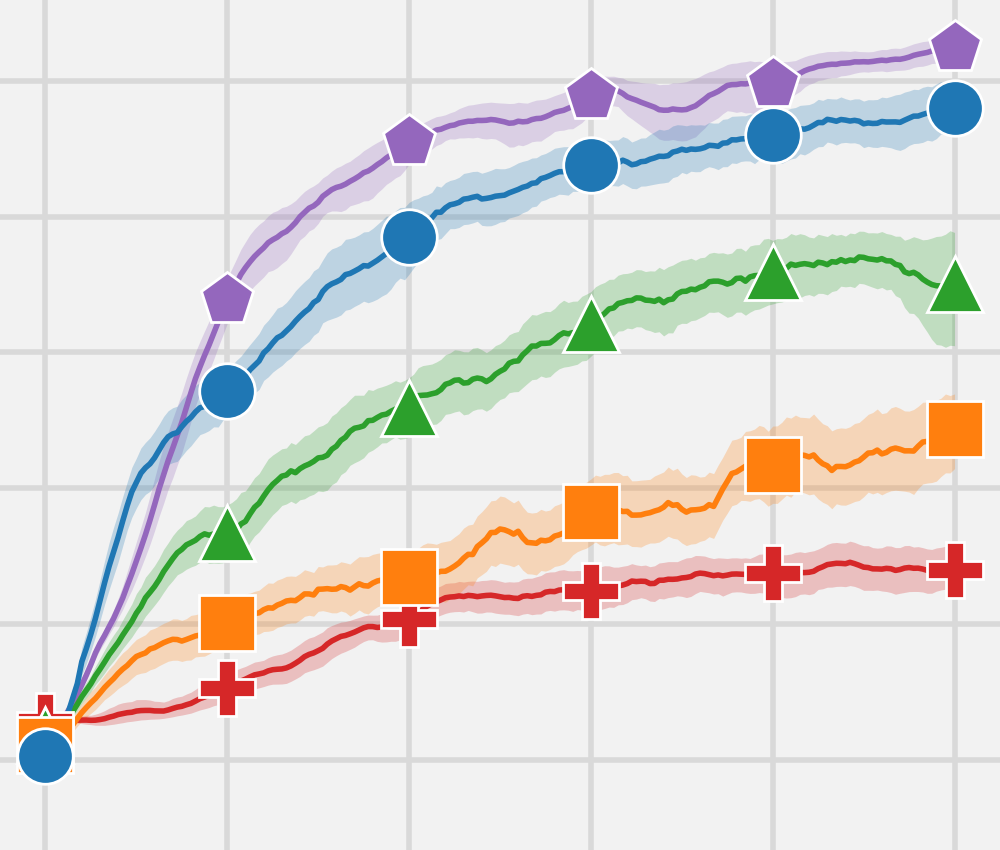};
    \end{axis}
    \end{tikzpicture}
    \begin{tikzpicture}[trim axis right]
    \begin{axis}[
        height=0.1615\textwidth,
        width=0.19\textwidth,
        title={
        \shortstack{
        DMC - Proprioceptive\vphantom{p}\\
        \textcolor{gray}{28 tasks}\vphantom{p}
        }
        },
        ylabel={Total Reward (1k)},
        xlabel={Time Steps (1M)},
        xtick={0, 20, 40, 60, 80, 100},
        xticklabels={0.0, 0.1, 0.2, 0.3, 0.4, 0.5},
        ytick={0, 200, 400, 600, 800},
        yticklabels={0.0, 0.2, 0.4, 0.6, 0.8},
    ]
    \addplot graphics [
    ymin=-20.0, ymax=900.0,
    xmin=-5.0, xmax=105.0,
    ]{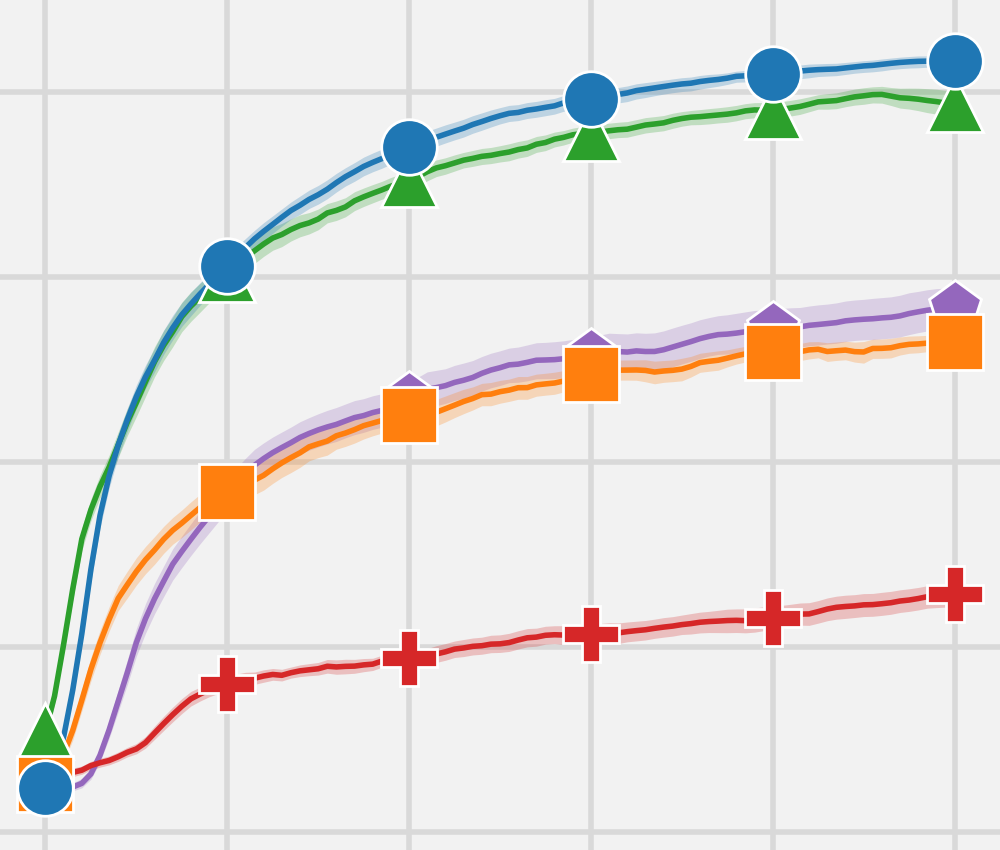};
    \end{axis}
    \end{tikzpicture}
    \begin{tikzpicture}[trim axis right]
    \begin{axis}[
        height=0.1615\textwidth,
        width=0.19\textwidth,
        title={
        \shortstack{
        DMC - Visual\vphantom{p}\\
        \textcolor{gray}{28 tasks}\vphantom{p}
        }
        },
        ylabel={Total Reward (1k)},
        xlabel={Time Steps (1M)},
        xtick={0, 20, 40, 60, 80, 100},
        xticklabels={0.0, 0.1, 0.2, 0.3, 0.4, 0.5},
        ytick={0, 100, 200, 300, 400, 500, 600, 700},
        yticklabels={0.0, 0.1, 0.2, 0.3, 0.4, 0.5, 0.6, 0.7},
    ]
    \addplot graphics [
    ymin=-20.0, ymax=650.0,
    xmin=-5.0, xmax=105.0,
    ]{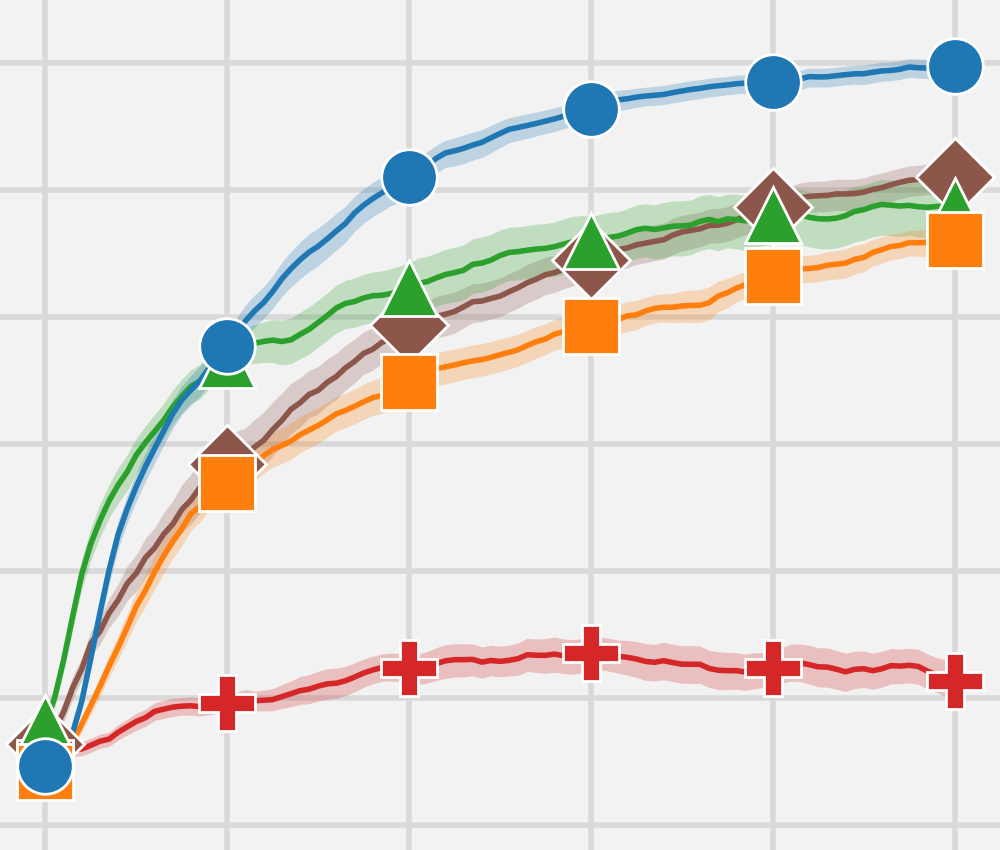};
    \end{axis}
    \end{tikzpicture}    
    \begin{tikzpicture}[trim axis right]
    \begin{axis}[
        height=0.1615\textwidth,
        width=0.19\textwidth,
        title={
        \shortstack{
        Atari\vphantom{p}\\
        \textcolor{gray}{57 tasks}\vphantom{p}
        }
        }, 
        ylabel={Human-Normalized},
        xlabel={Time Steps (1M)},
        xtick={-5.0, 0.0, 5.0, 10.0, 15.0, 20.0, 25.0, 30.0},
        xticklabels={-0.2, 0.0, 0.5, 1.0, 1.5, 2.0, 2.5},
        ytick={-1.0, 0.0, 1.0, 2.0, 3.0, 4.0, 5.0},
        yticklabels={-1.0, 0.0, 1.0, 2.0, 3.0, 4.0, 5.0},
    ]
    \addplot graphics [
    ymin=-0.5, ymax=4.5,
    xmin=-1.25, xmax=26.25,
    ]{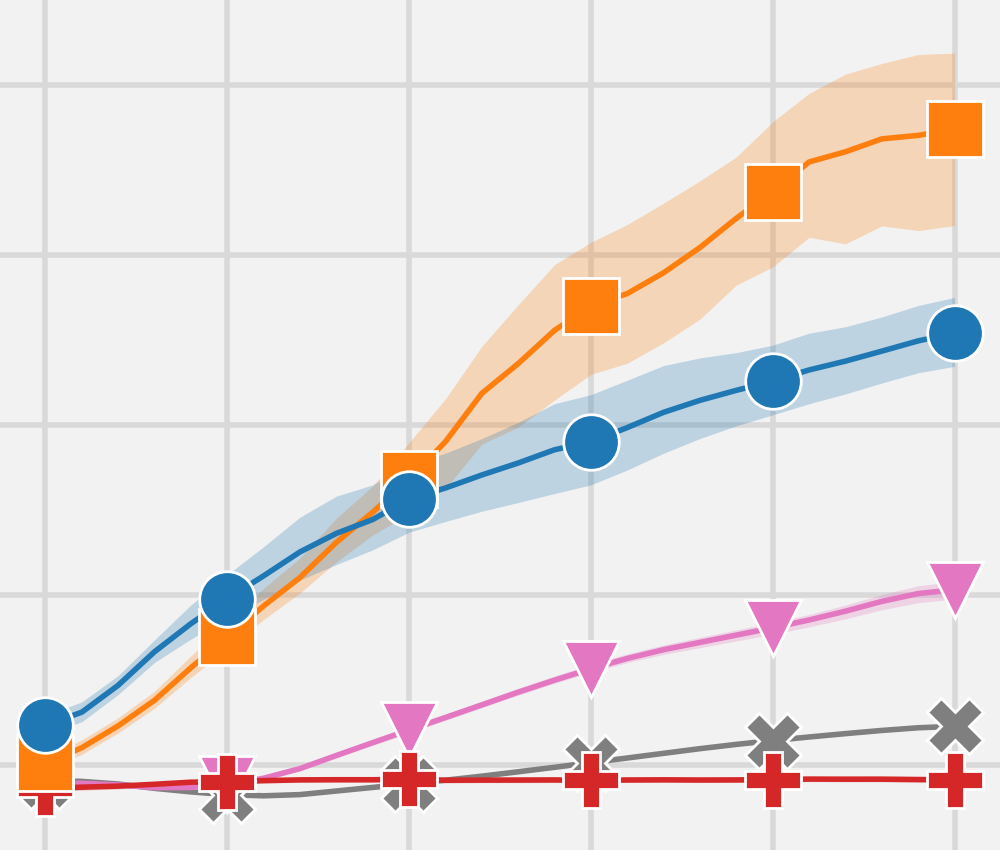};
    \end{axis}
    \end{tikzpicture}

    \fcolorbox{gray!10}{gray!10}{
    \small
    \coloredcircle{sb_blue}{sb_blue!25!light_gray} MR.Q \quad
    \coloredsquare{sb_orange}{sb_orange!25!light_gray} DreamerV3 \quad
    \coloredtriangle{sb_green}{sb_green!25!light_gray} TD-MPC2 \quad
    \coloredplus{sb_red}{sb_red!25!light_gray} PPO \quad
    \coloredpoly{sb_purple}{sb_purple!25!light_gray} TD7 \quad
    \coloreddiamond{sb_brown}{sb_brown!25!light_gray} DrQ-v2 \quad 
    \coloredinvertedtriangle{sb_pink}{sb_pink!25!light_gray} Rainbow \quad
    \coloredx{sb_gray}{sb_gray!25!light_gray} DQN
    }
    
    \captionof{figure}{\textbf{Aggregate learning curves.} Average performance over each benchmark. Results are over 10 seeds. The shaded area captures a 95\% stratified bootstrap confidence interval. Due to action repeat, 500k time steps in DMC correspond to 1M frames in the original environment and 2.5M time steps in Atari corresponds to 10M frames in the original environment. 
    } \label{fig:aggregate_curves}            
\end{figure}

We evaluate MR.Q on four popular RL benchmarks and 118 environments, and compare its performance against strong domain-specific baselines, general model-based approaches, DreamerV3~\citep{hafner2023mastering} and TD-MPC2~\citep{hansen2024td}, and a general model-free algorithm, PPO~\citep{ppo}. Rather than establish MR.Q as the state-of-the-art approach in any particular benchmark, our objective is to demonstrate its broad applicability and effectiveness across a diverse set of tasks with a single set of hyperparameters. 
The baselines use author-suggested default hyperparameters and are fixed across environments. Additional details can be found in \cref{appendix:exp_details}.

\subsection{Main Results}

Aggregate learning curves are displayed in \ref{fig:aggregate_curves}, with full results displayed in \cref{appendix:additional_results}. 

\textbf{Gym - Locomotion.} This subset of the Gym benchmark~\citep{OpenAIGym, towers2024gymnasium} considers 5 locomotion tasks in the MuJoCo simulator~\citep{mujoco} with continuous actions and low level states. Agents are trained for 1M time steps without any environment preprocessing. We evaluate against three baselines: TD7~\citep{fujimoto2024sale}, a state-of-the-art (or near) approach for this benchmark, as well as TD-MPC2, DreamerV3, and PPO. To aggregate results, we normalize using the performance of TD3~\citep{fujimoto2018addressing}.

\textbf{DMC - Proprioceptive.} The DeepMind Control suite (DMC)~\citep{tassa2018deepmind} is a collection of continuous control robotics tasks built on the MuJoCo simulator. These tasks use the proprioceptive states as the observation space, meaning that the input is a vector, and limit the total reward for each episode at 1000, making it easy to aggregate results. 
We report results on all 28 default tasks that were used by either TD-MPC2 or DreamerV3. Agents are trained for 500k time steps, equivalent to 1M frames in the original environment due to action repeat. %
For comparison, we evaluate against the same three algorithms as in the Gym benchmark, with TD-MPC2 considered state-of-the-art (or near) for this benchmark. We also include TD7 due to its strong performance in the Gym benchmark.

\textbf{DMC - Visual.} The visual DMC benchmark includes the same 28 tasks as the proprioceptive benchmark, but uses image-based observations instead. 
Agents are trained for 500k time steps. For baselines, we include \mbox{DrQ-v2}~\citep{yarats2022mastering}, given its state-of-the-art (or near) performance in model-free RL, alongside TD-MPC2, DreamerV3, and PPO. 

\textbf{Atari.} The Atari benchmark is built on the Arcade Learning Environment~\citep{bellemare2013arcade}. This benchmark uses pixel observations and discrete actions and includes the 57 games used by DreamerV3. We follow standard preprocessing steps, including sticky actions~\citep{machado2018revisiting} (full details in \cref{appendix:envs}). Agents are trained for 2.5M time steps (equivalent to 10M frames), a setting which has been considered by prior work~\citep{sokar2023dormant}. For comparison, we evaluate against three baselines: the model-based approach DreamerV3, as well as model-free approaches, DQN~\citep{DQN}, Rainbow~\citep{hessel2018rainbow}, and PPO.
Results are aggregated by normalizing scores against human performance. 

\textbf{Discussion.} Throughout our experiments, we find the presence of ``no free lunch'', where the top-performing baseline in one benchmark fails to replicate its success in another. Regardless, MR.Q achieves the highest performance in both DMC benchmarks, showcasing its ability to handle different observation spaces. Although it falls slightly behind TD7 in the Gym benchmark, MR.Q is the strongest method overall across all continuous control benchmarks. In Atari, while DreamerV3 outperforms MR.Q, it relies on a model with 40 times more parameters and struggles comparatively in the remaining benchmarks. When compared to the model-free baselines, MR.Q surpasses PPO, DQN, and Rainbow, demonstrating its effectiveness with discrete action spaces.

\subsection{Design Study} \label{sec:design}

To better understand the impact of certain design choices and hyperparameters, we attempt variations of MR.Q, and report the aggregate results in \ref{table:design}.

\begin{table}[ht]
\footnotesize
\centering
\setlength{\tabcolsep}{0pt}
\newcolumntype{Y}{>{\centering\arraybackslash}X} 
\caption{\textbf{Design study.} Average difference in normalized performance from varying design choices across each benchmark over 5 seeds. %
Negative changes are highlighted lightly \hlfancy{sb_red!10}{$[-0.01,-0.2)$}. Damaging changes are highlighted moderately \hlfancy{sb_red!25}{$[-0.2,-0.5)$}. Catastrophic changes are highlighted boldly \hlfancy{sb_red!40}{$(\leq -0.5)$}. Positive changes are similarly highlighted \hlfancy{sb_green!10}{$(> 0.01)$}.} \label{table:design}
\vspace{-4pt}
\begin{tabularx}{\textwidth}{l@{\hspace{1pt}} r@{}X r@{}X r@{}X r@{}X}
\toprule
\multirow{2}{*}{Design} & \multicolumn{2}{c}{Gym - Locomotion} & \multicolumn{2}{c}{DMC - Proprioceptive} & \multicolumn{2}{c}{DMC - Visual} & \multicolumn{2}{c}{Atari - 1M} \\
 & \multicolumn{2}{c}{\textcolor{gray}{TD3-Normalized}} & \multicolumn{2}{c}{\textcolor{gray}{Reward (1k)}} & \multicolumn{2}{c}{\textcolor{gray}{Reward (1k)}} & \multicolumn{2}{c}{\textcolor{gray}{Human-Normalized}} \\ 
\midrule
& \multicolumn{8}{c}{Relaxations} \\
\midrule
Linear value function 
& \cellcolor{sb_red!40} -1.17 & \cellcolor{sb_red!40}~\textcolor{gray}{[-1.19, -1.15]}
& \cellcolor{sb_red!40} \z-0.58 & \cellcolor{sb_red!40}~\textcolor{gray}{[-0.59, -0.56]}
& \cellcolor{sb_red!25} \z-0.41 & \cellcolor{sb_red!25}~\textcolor{gray}{[-0.42, -0.39]}
& \cellcolor{sb_red!40} -1.35 & \cellcolor{sb_red!40}~\textcolor{gray}{[-1.41, -1.29]} \\
Dynamics target 
& \cellcolor{sb_red!10} -0.10 & \cellcolor{sb_red!10}~\textcolor{gray}{[-0.17, -0.04]}
& \cellcolor{sb_red!10} -0.15 & \cellcolor{sb_red!10}~\textcolor{gray}{[-0.15, -0.15]}
& \cellcolor{sb_red!10} -0.05 & \cellcolor{sb_red!10}~\textcolor{gray}{[-0.05, -0.04]} 
& \cellcolor{sb_red!25} -0.38 & \cellcolor{sb_red!25}~\textcolor{gray}{[-0.81, 0.05]} \\
No target encoder 
& \cellcolor{sb_red!40} -0.53 & \cellcolor{sb_red!40}~\textcolor{gray}{[-0.60, -0.46]}
& \cellcolor{sb_red!25} -0.35 & \cellcolor{sb_red!25}~\textcolor{gray}{[-0.35, -0.34]}
& \cellcolor{sb_red!10} -0.15 & \cellcolor{sb_red!10}~\textcolor{gray}{[-0.15, -0.15]} 
& \cellcolor{sb_red!40} -0.86 & \cellcolor{sb_red!40}~\textcolor{gray}{[-0.89, -0.83]} \\
Revert 
& \cellcolor{sb_red!40} -1.47 & \cellcolor{sb_red!40}~\textcolor{gray}{[-1.54, -1.39]}
& \cellcolor{sb_red!40} -0.72 & \cellcolor{sb_red!40}~\textcolor{gray}{[-0.73, -0.72]}
& \cellcolor{sb_red!40} -0.52 & \cellcolor{sb_red!40}~\textcolor{gray}{[-0.52, -0.51]}
& \cellcolor{sb_red!40} -1.69 & \cellcolor{sb_red!40}~\textcolor{gray}{[-1.70, -1.67]} \\
Non-linear model 
& \cellcolor{sb_red!10} -0.01 & \cellcolor{sb_red!10}~\textcolor{gray}{[-0.07, 0.03]}
& \cellcolor{white} -0.00 & \cellcolor{white}~\textcolor{gray}{[-0.02, 0.01]} 
& \cellcolor{white} -0.01 & \cellcolor{white}~\textcolor{gray}{[-0.02, -0.00]} 
& \cellcolor{sb_red!10} -0.07 & \cellcolor{sb_red!10}~\textcolor{gray}{[-0.32, 0.18]} \\
\midrule
& \multicolumn{8}{c}{Loss functions} \\
\midrule
MSE reward loss
& \cellcolor{sb_green!10} 0.10 & \cellcolor{sb_green!10}~\textcolor{gray}{[-0.02, 0.19]}
& \cellcolor{sb_red!10} -0.06 & \cellcolor{sb_red!10}~\textcolor{gray}{[-0.08, -0.05]} 
& \cellcolor{sb_red!10} -0.05 & \cellcolor{sb_red!10}~\textcolor{gray}{[-0.07, -0.04]} 
& \cellcolor{sb_red!40} -0.79 & \cellcolor{sb_red!40}~\textcolor{gray}{[-0.86, -0.73]} \\
No reward scaling
& \cellcolor{sb_red!10} -0.04 & \cellcolor{sb_red!10}~\textcolor{gray}{[-0.09, 0.02]}
& \cellcolor{white} -0.01 & \cellcolor{white}~\textcolor{gray}{[-0.02, 0.00]} 
& \cellcolor{white} -0.00 & \cellcolor{white}~\textcolor{gray}{[-0.01, 0.01]} 
& \cellcolor{sb_green!10} 0.18 & \cellcolor{sb_green!10}~\textcolor{gray}{[-0.25, 0.56]} \\
No min
& \cellcolor{sb_red!10} -0.09 & \cellcolor{sb_red!10}~\textcolor{gray}{[-0.16, -0.01]}
& \cellcolor{white} -0.01 & \cellcolor{white}~\textcolor{gray}{[-0.02, 0.01]} 
& \cellcolor{white} 0.00 & \cellcolor{white}~\textcolor{gray}{[-0.01, 0.01]} 
& \cellcolor{sb_green!10} 0.13 & \cellcolor{sb_green!10}~\textcolor{gray}{[-0.10, 0.58]} \\
No LAP
& \cellcolor{sb_red!10} -0.10 & \cellcolor{sb_red!10}~\textcolor{gray}{[-0.24, -0.00]}
& \cellcolor{white} 0.00 & \cellcolor{white}~\textcolor{gray}{[-0.00, 0.01]} 
& \cellcolor{sb_red!10} -0.01 & \cellcolor{sb_red!10}~\textcolor{gray}{[-0.02, -0.01]}
& \cellcolor{sb_red!10} -0.13 & \cellcolor{sb_red!10}~\textcolor{gray}{[-0.38, 0.14]} \\
No MR
& \cellcolor{sb_red!40} -0.56 & \cellcolor{sb_red!40}~\textcolor{gray}{[-0.69, -0.43]}
& \cellcolor{sb_red!10} -0.19 & \cellcolor{sb_red!10}~\textcolor{gray}{[-0.19, -0.18]} 
& \cellcolor{sb_red!10} -0.07 & \cellcolor{sb_red!10}~\textcolor{gray}{[-0.09, -0.03]} 
& \cellcolor{sb_red!40} -0.78 & \cellcolor{sb_red!40}~\textcolor{gray}{[-0.88, -0.69]}  \\
\midrule
& \multicolumn{8}{c}{Horizons} \\
\midrule
1-step return
& \cellcolor{sb_red!25} -0.33 & \cellcolor{sb_red!25}~\textcolor{gray}{[-0.46, -0.21]}
& \cellcolor{sb_red!10} -0.04 & \cellcolor{sb_red!10}~\textcolor{gray}{[-0.05, -0.02]} 
& \cellcolor{sb_red!10} -0.03 & \cellcolor{sb_red!10}~\textcolor{gray}{[-0.03, -0.02]} 
& \cellcolor{sb_red!40} -0.70 & \cellcolor{sb_red!40}~\textcolor{gray}{[-0.81, -0.59]} \\
No unroll
& \cellcolor{sb_green!10} 0.07 & \cellcolor{sb_green!10}~\textcolor{gray}{[0.01, 0.14]}
& \cellcolor{white} -0.01 & \cellcolor{white}~\textcolor{gray}{[-0.01, -0.00]}
& \cellcolor{sb_red!10} -0.04 & \cellcolor{sb_red!10}~\textcolor{gray}{[-0.06, -0.01]}
& \cellcolor{sb_red!25} -0.33 & \cellcolor{sb_red!25}~\textcolor{gray}{[-0.41, -0.28]} \\
\bottomrule
\end{tabularx}
\end{table}

\textbf{Relaxations.} In \cref{sec:theory}, we outlined a loss~(\ref{eqn:linear_features_loss}) that, if globally minimized, would provide features that are linear with the true value function. MR.Q in practice relaxes this theoretical result by modifying the loss and using a non-linear value function. In \textbf{Linear value function}, we replace the non-linear value function with a linear function. In \textbf{Dynamics target}, we replace the state embedding dynamics target with a state-action embedding~$\bar{\mathbf{z}}_{s'a'}$ determined from the target state-action encoder~$g_\omega$. In \textbf{No target encoder}, we use the current encoder to generate the dynamics target~$\mathbf{z}_{s'a'}$, and jointly optimize it within the encoder loss. In \textbf{Revert}, we consider all of the aforementioned changes simultaneously, using linear value functions and setting the dynamics target as a state-action embedding determined by the current encoder. In \textbf{Non-linear model}, we replace the linear MDP predictor with individual networks that predict each component separately from $\mathbf{z}_{sa}$. 

\textbf{Loss functions.} MR.Q's loss functions use several unconventional choices. In \textbf{MSE reward loss}, we replace the categorical loss function on the predicted reward in \ref{eqn:reward_loss} with the mean-squared error (MSE). In \textbf{No reward scaling}, we remove the reward scaling in \ref{eqn:value_loss}, setting $\bar r = \bar{r}'=1$.  
In \textbf{No min}, we take the mean over the target value functions instead of the minimum in \ref{eqn:value_loss}. In \textbf{No LAP}, we remove prioritized sampling~\citep{fujimoto2020equivalence} and use the MSE instead of the Huber loss in the value update. Lastly, in \textbf{No MR}, we remove model-based representation learning and train the encoder end-to-end with the value function. 

\textbf{Horizons.} Finally, we consider the role of extended predictions. In \textbf{1-step return}, we remove multi-step value predictions and use TD learning. In \textbf{No unroll}, we remove the dynamics unrolling in \ref{eqn:encoder_full_loss}, by setting the encoder horizon~$H_\text{Enc}=1$. 

\textbf{Discussion.} The results of our design study show the benefit of balancing theory with practical relaxations. 
The experiments further validate our design choices and hyperparameters. We highlight two results in particular: (1) increasing the model capacity in the ``non-linear model'' experiment, does not improve performance. This outcome suggests that maintaining an approximately linear relationship with the value function can be more impactful than increased capacity. (2) Our study also reveals a key distinction between the Gym and Atari benchmarks---while the ``MSE reward loss'' and ``No unroll'' variants offer moderate performance gains in Gym, they significantly degrade performance in Atari. This discrepancy highlights how hyperparameters can overfit to individual benchmarks, emphasizing the importance of evaluating algorithms across multiple benchmarks.

\section{Discussion and Conclusion}

This paper introduces MR.Q, a general model-free deep RL algorithm that achieves strong performance across diverse benchmarks and environments. Drawing inspiration from the theory of model-based representation learning, MR.Q demonstrates that model-free deep RL is a promising avenue for building general-purpose algorithms that achieve high performance across environments, while being simpler and less expensive than model-based alternatives.

Our work also reveals insights on which design choices matter when building general-purpose model-free deep RL algorithms and how common benchmarks respond to these design choices.

\textbf{Model-based and model-free RL.} MR.Q integrates model-based objectives with a model-free backbone during training, effectively blurring the boundary between traditional model-based and model-free RL. While MR.Q could be extended to the model-based setting by incorporating planning or simulated trajectories with the state-action encoder, these components can add significant execution time and increase the overall complexity and tuning required by a method. Moreover, the performance of MR.Q in these common RL benchmarks demonstrates that these model-based components may be simply unnecessary---suggesting that the representation itself could be the most valuable aspect of model-based learning, even in methods that do use planning. This argument is echoed by DreamerV3 and TD-MPC2, which rely on short planning horizons and trajectory generation, while including both value functions and traditional model-free policy updates. As such, it may be necessary to examine more complex settings, to reliably see a benefit from model-based search or planning, e.g.,~\citep{alphago}.

\textbf{Universality of RL benchmarks.} Our results demonstrate that there is a striking lack of positive transfer between benchmarks. For example, despite the similarities in tasks and the same underlying MuJoCo simulator, the top performers in Gym and DMC fail to replicate their success on the opposing benchmark. Similarly, although DreamerV3 excels at Atari, these performance benefits do not translate to continuous control environments, underperforming TD3 in Gym and outright failing to learn the Dog and Humanoid tasks in DMC~(see \cref{appendix:additional_results}). These findings show the limitations of single-benchmark evaluations, indicating that success on one benchmark may not translate easily to others, and highlights the need for more comprehensive benchmarks.

\textbf{Limitations.} MR.Q is only the first step towards a new generation of general-purpose model-free deep RL algorithms. Many challenges remains for a fully general algorithm. In particular, MR.Q is not equipped to handle settings such as hard exploration tasks or non-Markovian environments. Another limitation is our evaluation only considers standard RL benchmarks. Although this allows direct comparison with other methods, established algorithms such as PPO have demonstrated their effectiveness in highly unique settings, such as team video games~\citep{berner2019dota}, drone racing~\citep{kaufmann2023champion}, and large language models~\citep{achiam2023gpt, touvron2023llama}. To demonstrate similar versatility, new algorithms must undergo the same rigorous testing across a range of tasks that is beyond the scope of any single study.

As the community continues to push the boundaries of what is possible with deep RL, we believe that building simpler general-purpose algorithms has the potential to make this technology more accessible to a wider audience, ultimately enabling users to train agents with ease. Perhaps one day---with just the click of a button. \clearpage

\subsubsection*{Acknowledgments}

We would like to thank Brandon Amos, Mikhael Henaff, Luis Pineda, Paria Rashidinejad, and Qinqing Zheng for insightful discussions and comments. 

\bibliography{iclr2025_conference}
\bibliographystyle{iclr2025_conference}

\clearpage

\appendix

\renewcommand \thepart{}
\renewcommand \partname{}

\part{Appendix}\label{appendix} %

\startcontents[appendix]

\vspace{-20pt}
\noindent\rule{\textwidth}{0.5pt}
\vspace{-20pt}

{\small

\printcontents[appendix]{l}{1}

}

\noindent\rule{\textwidth}{0.5pt}

\section{Proofs} \label{appendix:proofs}

\setcounter{theorem}{0}

\begin{theorem} \label{appendix:thm:equal}
    The fixed point of the model-free approach~(\ref{eqn:semi_gradient_TD}) and the solution of the model-based approach~(\ref{eqn:linear_model_based}) are the same.
\end{theorem}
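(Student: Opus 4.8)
The plan is to write both solutions in closed linear-algebraic form in terms of the feature matrix and then show they coincide by a direct manipulation, following \cite{parr2008analysis} and \cite{song2016linear}. Let $\Phi$ be the matrix whose rows are the embeddings $\mathbf{z}_{sa}^\top$ over the (finite) support of $D$, let $\Phi'$ be the matrix whose rows are the corresponding $\mathbf{z}_{s'a'}^\top$, let $D$ be the diagonal matrix of sampling probabilities, and let $\mathbf{r}$ be the reward vector. With this notation, the ordinary least-squares solutions in \ref{eqn:wp_wr} are $\mathbf{w}_r = (\Phi^\top D \Phi)^{-1}\Phi^\top D \mathbf{r}$ and $W_p = (\Phi^\top D \Phi)^{-1}\Phi^\top D \Phi'$, under the standing assumption that $\Phi^\top D \Phi$ is invertible (the features are linearly independent on the data), exactly as in \cite{parr2008analysis}.

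First I would compute the model-free fixed point. Setting the expected semi-gradient TD update \ref{eqn:semi_gradient_TD} to zero gives the normal equations $\E_D[\mathbf{z}_{sa}(\mathbf{z}_{sa}^\top \mathbf{w} - r - \y\, \mathbf{z}_{s'a'}^\top \mathbf{w})] = 0$, i.e.\ $\Phi^\top D(\Phi - \y \Phi')\mathbf{w} = \Phi^\top D \mathbf{r}$, so the fixed point is $\mathbf{w}_\textnormal{TD} = \big(\Phi^\top D(\Phi - \y \Phi')\big)^{-1}\Phi^\top D \mathbf{r}$.

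Second I would evaluate the model-based solution. The geometric series in \ref{eqn:linear_model_based} sums to $\mathbf{w}_\textnormal{mb} = (I - \y W_p)^{-1}\mathbf{w}_r$, provided $\sum_t \y^t W_p^t$ converges; this I would justify as in \cite{parr2008analysis, song2016linear}, since the induced linear transition operator $W_p$ is a non-expansion in the appropriate weighted norm and hence $\y W_p$ has spectral radius below one. Substituting the closed forms for $\mathbf{w}_r$ and $W_p$, writing $A := \Phi^\top D \Phi$, and using the identity $(A^{-1}B)^{-1}A^{-1} = B^{-1}$, one gets
\[
\mathbf{w}_\textnormal{mb} = \big(I - \y A^{-1}\Phi^\top D \Phi'\big)^{-1}A^{-1}\Phi^\top D \mathbf{r} = \big(A - \y \Phi^\top D \Phi'\big)^{-1}\Phi^\top D \mathbf{r} = \big(\Phi^\top D(\Phi - \y \Phi')\big)^{-1}\Phi^\top D \mathbf{r},
\]
which is exactly $\mathbf{w}_\textnormal{TD}$.

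The routine parts are the two least-squares formulas and the final cancellation. The one genuine subtlety---and the main obstacle---is the convergence of the geometric series defining $\mathbf{w}_\textnormal{mb}$, together with the invertibility of $\Phi^\top D(\Phi - \y \Phi')$; both hold under the same non-degeneracy and non-expansion hypotheses used by \cite{parr2008analysis}, which I would import rather than re-derive.
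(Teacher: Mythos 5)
Your proposal is correct and follows essentially the same route as the paper's proof: closed-form least-squares expressions for $\mathbf{w}_r$ and $W_p$, summation of the geometric series to $(I-\y W_p)^{-1}\mathbf{w}_r$, and an algebraic cancellation showing this equals the TD fixed point $\lp Z^\top Z - \y Z^\top Z'\rp^{-1} Z^\top R$. Your version is in fact slightly more careful than the paper's (explicit sampling-distribution weighting and explicit invertibility/convergence caveats imported from \cite{parr2008analysis}), but the argument is the same.
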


\begin{proof}
Let $Z$ be a matrix containing state-action embeddings~$\mathbf{z}_{sa}$ for each state-action pair $(s,a) \in S \times A$. Let $Z'$ be the corresponding matrix of next state-action embeddings~$\mathbf{z}_{s'a'}$. Let $R$ be the vector of the corresponding rewards $r(s,a)$. 

The linear semi-gradient TD update:
\begin{align}
    \mathbf{w}_{t+1} :=&~ \mathbf{w}_{t} - \al Z^\top (Z \mathbf{w}_{t} - (R + \y Z' \mathbf{w}_{t})) \\
    =&~ \mathbf{w}_{t} - \al Z^\top Z \mathbf{w}_{t} + \al Z^\top R + \al \y Z^\top Z' \mathbf{w}_{t} \\
    =&~ (I - \al (Z^\top Z - \y Z^\top Z')) \mathbf{w}_{t} + \al Z^\top R \\ 
    =&~ (I - \al A) \mathbf{w}_{t} + \al B,
\end{align}
where $A := Z^\top Z - \y Z^\top Z'$ and $B := Z^\top R$.

The fixed point of the system:
\begin{align}
    \mathbf{w}_\text{mf} &= (I - \al A) \mathbf{w}_\text{mf} + \al B \\
    \mathbf{w}_\text{mf} - (I - \al A) \mathbf{w}_\text{mf} &= \al B \\
    \al A \mathbf{w}_\text{mf} &= \al B \\
    \mathbf{w}_\text{mf} &= A^{-1} B.
\end{align}

The least squares solution to $W_p$ and $\mathbf{w}_r$
\begin{align}
    W_p &:= \lp Z^\top Z \rp^{-1} Z^\top Z' \\
    \mathbf{w}_r &:= \lp Z^\top Z \rp^{-1} Z^\top R
\end{align}

By rolling out $W_p$ and $\mathbf{w}_r$, we arrive at a model-based solution:
\begin{equation}
    Q := Z \mathbf{w}_\text{mb} = Z \sum_{t=0}^{\infty} \y^t W_p^t \mathbf{w}_r.  
\end{equation}

Simplify $\mathbf{w}_\text{mb}$: 
\begin{align}
    \mathbf{w}_\text{mb} :=&~ \sum_{t=0}^{\infty} \y^t W_p^t \mathbf{w}_r \\
    \mathbf{w}_\text{mb} =&~ \lp I - \y W_p \rp^{-1} \mathbf{w}_r \\
    \mathbf{w}_\text{mb} =&~ \lp I - \y \lp Z^\top Z \rp^{-1} Z^\top Z' \rp^{-1} \lp Z^\top Z \rp^{-1} Z^\top R \\
    Z^\top Z \lp I - \y \lp Z^\top Z \rp^{-1} Z^\top Z' \rp \mathbf{w}_\text{mb} =&~ Z^\top R \\
    \lp Z^\top Z - \y Z^\top Z' \rp \mathbf{w}_\text{mb} =&~ Z^\top R \\
    \mathbf{w}_\text{mb} =&~ A^{-1} B \\ 
    \mathbf{w}_\text{mb} =&~ \mathbf{w}_\text{mf}. 
\end{align}
\end{proof}

\begin{theorem} The value error of the solution described by \ref{thm:equal} is bounded by the accuracy of the estimated dynamics and reward:
\begin{equation}
    |\textnormal{VE}(s,a)| \leq \frac{1}{1 - \y} \max_{(s,a) \in S \times A} \lp | \mathbf{z}_{sa}^\top \mathbf{w}_r - \E_{r|s,a}[r] | + \max_i | \mathbf{w}_i | \sum | \mathbf{z}_{sa}^\top W_p - \E_{s',a'|s,a} [ \mathbf{z}_{s'a'} ] | \rp.
\end{equation}  
\end{theorem}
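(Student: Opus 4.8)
The plan is to bound the value error $\mathrm{VE}(s,a) = \mathbf{z}_{sa}^\top \mathbf{w}_\text{mb} - Q^\pi(s,a)$ by recognizing that $Q^\pi$ satisfies the true Bellman equation while $\mathbf{z}_{sa}^\top\mathbf{w}_\text{mb}$ satisfies an \emph{approximate} Bellman equation induced by the learned reward weights $\mathbf{w}_r$ and the learned transition operator $W_p$. Concretely, from Theorem~\ref{thm:equal} (and the rollout definition \ref{eqn:linear_model_based}), the model-based value satisfies $\mathbf{z}_{sa}^\top\mathbf{w}_\text{mb} = \mathbf{z}_{sa}^\top\mathbf{w}_r + \y\,\mathbf{z}_{sa}^\top W_p \mathbf{w}_\text{mb}$, i.e.\ it is the fixed point of an approximate Bellman operator $\hat{\mathcal{T}}$ acting on functions of $(s,a)$. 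I would then introduce an intermediate quantity—the Bellman residual of $Q^\pi$ under $\hat{\mathcal T}$—and write $\mathrm{VE} = (I - \y \hat P)^{-1} \delta$, where $\hat P$ is the (stochastic-like) operator with $(\hat P h)(s,a)$ corresponding to $\mathbf{z}_{sa}^\top W_p$ applied in the feature coordinates and $\delta(s,a)$ is the one-step modeling error at $(s,a)$.

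The key steps, in order: (i) Write out the two fixed-point equations, one for $Q^\pi$ using the true reward $\E_{r|s,a}[r]$ and true next-state-action expectation $\E_{s',a'|s,a}[\mathbf z_{s'a'}^\top\mathbf w_\text{mb}]$, and one for $\mathbf z_{sa}^\top\mathbf w_\text{mb}$ using $\mathbf z_{sa}^\top\mathbf w_r$ and $\mathbf z_{sa}^\top W_p\mathbf w_\text{mb}$. (ii) Subtract them to get $\mathrm{VE}(s,a) = \big(\mathbf z_{sa}^\top\mathbf w_r - \E_{r|s,a}[r]\big) + \y\big(\mathbf z_{sa}^\top W_p - \E_{s',a'|s,a}[\mathbf z_{s'a'}]\big)^\top\mathbf w_\text{mb} + \y\,\E_{s',a'|s,a}[\mathrm{VE}(s',a')]$. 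Here I split the next-value term into a ``model error times weights'' piece and a ``true expectation of VE'' piece; the dynamics-error piece is controlled by $\|\mathbf z_{sa}^\top W_p - \E_{s',a'|s,a}[\mathbf z_{s'a'}]\|$ times $\max_i|\mathbf{w}_i|$ (using H\"older, which explains why the theorem writes $\max_i|\mathbf w_i|\sum|\cdot|$ with an $\ell_1$-style sum over feature coordinates). (iii) Take absolute values, bound the first two terms by the stated per-$(s,a)$ maximum, call it $\epsilon := \max_{(s,a)}\big(|\mathbf z_{sa}^\top\mathbf w_r - \E_{r|s,a}[r]| + \max_i|\mathbf w_i|\sum|\mathbf z_{sa}^\top W_p - \E_{s',a'|s,a}[\mathbf z_{s'a'}]|\big)$, and bound $|\E_{s',a'|s,a}[\mathrm{VE}(s',a')]| \le \|\mathrm{VE}\|_\infty$. (iv) This yields $\|\mathrm{VE}\|_\infty \le \epsilon + \y\|\mathrm{VE}\|_\infty$ (noting $\y \le 1$ absorbs the coefficient on the dynamics term), hence $\|\mathrm{VE}\|_\infty \le \frac{\epsilon}{1-\y}$, which is the claim.

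The main obstacle I anticipate is making the operator bookkeeping rigorous: one must be careful that $\mathbf{w}_\text{mb}$ in step (ii) is exactly the quantity appearing in the bound's $\max_i|\mathbf w_i|$, that the ``sum'' in the statement is over the feature-vector components of the dynamics error (an $\ell_1$ norm on $\mathbb R^{\dim \mathbf z}$) paired against $\mathbf w_\text{mb}\in\mathbb R^{\dim\mathbf z}$ via H\"older, and that taking the supremum over $(s,a)$ commutes appropriately with the contraction argument—i.e.\ the self-referential bound $\|\mathrm{VE}\|_\infty \le \epsilon + \y\|\mathrm{VE}\|_\infty$ is legitimate because $\mathrm{VE}$ is bounded (which follows since both $Q^\pi$ and the linear model-based value are finite on a finite $S\times A$, or under suitable boundedness assumptions). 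A secondary subtlety is whether $\y$ or $1$ multiplies the dynamics-error term inside $\epsilon$; the statement uses a coefficient of $1$, so I would simply bound $\y \le 1$ there to match, absorbing the looseness into the final $\frac{1}{1-\y}$ factor. Everything else is routine triangle-inequality and geometric-series manipulation of the kind already carried out in the proof of Theorem~\ref{thm:equal}.
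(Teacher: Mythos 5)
Your proposal is correct and follows essentially the same route as the paper: you derive the identical one-step error recursion $\mathrm{VE}(s,a) = \bigl(\mathbf{z}_{sa}^\top \mathbf{w}_r - \mathbb{E}_{r|s,a}[r]\bigr) + \gamma\bigl(\mathbf{z}_{sa}^\top W_p - \mathbb{E}_{s',a'|s,a}[\mathbf{z}_{s'a'}]\bigr)\mathbf{w} + \gamma\,\mathbb{E}_{s',a'|s,a}[\mathrm{VE}(s',a')]$ from the fixed-point identity $\mathbf{w} - \gamma W_p\mathbf{w} = \mathbf{w}_r$ of Theorem~\ref{thm:equal}, then apply the same H\"older-type bound giving $\max_i|\mathbf{w}_i|$ times the $\ell_1$ sum and absorb $\gamma \leq 1$ on the dynamics term. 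The only difference is in how the recursion is resolved—the paper unrolls it over the discounted state-action visitation distribution $p^\pi$ before bounding by the maximum, whereas you take sup-norms and solve the self-referential inequality $\|\mathrm{VE}\|_\infty \leq \epsilon + \gamma\|\mathrm{VE}\|_\infty$—which are equivalent geometric-series arguments yielding the same $\tfrac{1}{1-\gamma}$ factor.
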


\begin{proof} 
Let $\mathbf{w}$ be the solution described in \ref{appendix:thm:equal}, i.e.\ $\mathbf{w} = \mathbf{w}_\text{mb} = \mathbf{w}_\text{mf}$. Let $p^\pi(s,a)$ be the discounted state-action visitation distribution according to the policy~$\pi$ starting from the state-action pair~$(s,a)$. 

Firstly from \ref{appendix:thm:equal}, we can show that 
\begin{align}
    &~\mathbf{w} = (I - \y W_p)^{-1} \mathbf{w}_r \\
    \Rightarrow&~ (I - \y W_p) \mathbf{w} = \mathbf{w}_r \\
    \Rightarrow&~ \mathbf{w} - \y W_p \mathbf{w} = \mathbf{w}_r. 
\end{align}

Simplify $\text{VE}(s,a)$: 
\begin{align}
    \text{VE}(s,a) :=&~Q(s,a) - Q^\pi(s,a) \\
    =&~Q(s,a) - Q^\pi(s,a) \\
    =&~Q(s,a) - \E_{r, s',a'} \lb r + \y Q^\pi(s',a') \rb \\
    =&~Q(s,a) - \E_{r, s',a'} \lb r + \y \lp Q(s',a') - \text{VE}(s',a') \rp \rb \\
    =&~Q(s,a) - \E_{r, s',a'} \lb r + \y Q(s',a') \rb + \y \E_{s',a'} \lb \text{VE}(s',a') \rb \\
    =&~Q(s,a) - \E_{r, s',a'} \lb r - \mathbf{z}_{sa}^\top \mathbf{w}_r + \mathbf{z}_{sa}^\top \mathbf{w}_r + \y Q(s',a') \rb + \y \E_{s',a'} \lb \text{VE}(s',a') \rb \\
    \begin{split}
    =&~Q(s,a) - \E_{r, s',a'} \lb r - \mathbf{z}_{sa}^\top \mathbf{w}_r + \mathbf{z}_{sa}^\top \mathbf{w}_r + \y \lp \mathbf{z}_{s'a'}^\top \mathbf{w} - \mathbf{z}_{sa}^\top W_p \mathbf{w} + \mathbf{z}_{sa}^\top W_p \mathbf{w} \rp \rb \\
    &+ \y \E_{s',a'} \lb \text{VE}(s',a') \rb     
    \end{split}\\
    \begin{split}
    =&~\mathbf{z}_{sa}^\top \mathbf{w} - \E_{r, s',a'} \lb r - \mathbf{z}_{sa}^\top \mathbf{w}_r + \mathbf{z}_{sa}^\top \mathbf{w}_r + \y \lp \mathbf{z}_{s'a'}^\top \mathbf{w} - \mathbf{z}_{sa}^\top W_p \mathbf{w} + \mathbf{z}_{sa}^\top W_p \mathbf{w} \rp \rb \\
    &+ \y \E_{s',a'} \lb \text{VE}(s',a') \rb     
    \end{split}\\
    \begin{split}
    =&~\mathbf{z}_{sa}^\top \mathbf{w} 
    - \E_{r} \lb r - \mathbf{z}_{sa}^\top \mathbf{w}_r + \mathbf{z}_{sa}^\top \mathbf{w}_r \rb 
    - \y \E_{s',a'} \lb 
    \mathbf{z}_{s'a'}^\top \mathbf{w} - \mathbf{z}_{sa}^\top W_p \mathbf{w} + \mathbf{z}_{sa}^\top W_p \mathbf{w} \rb \\
    &+ \y \E_{s',a'} \lb \text{VE}(s',a') \rb     
    \end{split}\\
    \begin{split}
    =&~\mathbf{z}_{sa}^\top \mathbf{w} - \mathbf{z}_{sa}^\top \mathbf{w}_r - \y \mathbf{z}_{sa}^\top W_p \mathbf{w}
    - \E_{r} \lb r - \mathbf{z}_{sa}^\top \mathbf{w}_r \rb 
    - \y \E_{s',a'} \lb 
    \mathbf{z}_{s'a'}^\top \mathbf{w} - \mathbf{z}_{sa}^\top W_p \mathbf{w}  \rb \\
    &+ \y \E_{s',a'} \lb \text{VE}(s',a') \rb     
    \end{split}\\
    \begin{split}
    =&~\mathbf{z}_{sa}^\top \lp \mathbf{w} - \y W_p \mathbf{w} - \mathbf{w}_r \rp
    - \E_{r} \lb r - \mathbf{z}_{sa}^\top \mathbf{w}_r \rb 
    - \y \E_{s',a'} \lb 
    \mathbf{z}_{s'a'}^\top \mathbf{w} - \mathbf{z}_{sa}^\top W_p \mathbf{w}  \rb \\
    &+ \y \E_{s',a'} \lb \text{VE}(s',a') \rb     
    \end{split}\\
    =&~ -\E_{r} \lb r - \mathbf{z}_{sa}^\top \mathbf{w}_r \rb 
    - \y \E_{s',a'} \lb 
    \mathbf{z}_{s'a'}^\top \mathbf{w} - \mathbf{z}_{sa}^\top W_p \mathbf{w}  \rb + \y \E_{s',a'} \lb \text{VE}(s',a') \rb \\
    =&~ \lp \mathbf{z}_{sa}^\top \mathbf{w}_r - \E_{r} \lb r \rb \rp
    + \y \lp \mathbf{z}_{sa}^\top W_p - \E_{s',a'} \lb 
    \mathbf{z}_{s'a'}^\top \rb \rp \mathbf{w} + \y \E_{s',a'} \lb \text{VE}(s',a') \rb.
\end{align}

Then given the recursive relationship, akin to the Bellman equation~\citep{suttonbarto}, the value error~$\text{VE}$ recursively expands to the discounted state-action visitation distribution~$p^\pi$. For $(\hat s, \hat a) \in S \times A$:
\begin{align}
\text{VE}(\hat s, \hat a) = \frac{1}{1 - \y} \E_{(s, a) \sim p^\pi(\hat s, \hat a)} \lb \lp \mathbf{z}_{sa}^\top \mathbf{w}_r - \E_{r | s, a} \lb r \rb \rp
    + \y \lp \mathbf{z}_{s a}^\top W_p - \E_{s',a' | s, a} \lb 
    \mathbf{z}_{s'a'}^\top \rb \rp \mathbf{w} \rb.
\end{align}

Taking the absolute value:
\begin{align}
|\text{VE}(\hat s, \hat a)| &= \left | \frac{1}{1 - \y} \E_{(s, a) \sim p^\pi(\hat s, \hat a)} \lb \lp \mathbf{z}_{sa}^\top \mathbf{w}_r - \E_{r | s, a} \lb r \rb \rp
    + \y \lp \mathbf{z}_{s a}^\top W_p - \E_{s',a' | s, a} \lb 
    \mathbf{z}_{s'a'}^\top \rb \rp \mathbf{w} \rb \right | \\
|\text{VE}(\hat s, \hat a)| &\leq \frac{1}{1 - \y} \E_{(s, a) \sim p^\pi(\hat s, \hat a)} \lb \left | \mathbf{z}_{sa}^\top \mathbf{w}_r - \E_{r | s, a} \lb r \rb \right |
    + \y \left | \lp \mathbf{z}_{s a}^\top W_p - \E_{s',a' | s, a} \lb 
    \mathbf{z}_{s'a'}^\top \rb \rp \mathbf{w} \right |  \rb  \\ 
&= \frac{1}{1 - \y} \max_{(s,a) \in S \times A} \lp \left | \mathbf{z}_{sa}^\top \mathbf{w}_r - \E_{r | s, a} \lb r \rb \right |
    + \y \left | \lp \mathbf{z}_{s a}^\top W_p - \E_{s',a' | s, a} \lb 
    \mathbf{z}_{s'a'}^\top \rb \rp \mathbf{w} \right | \rp \\ 
&\leq \frac{1}{1 - \y} \max_{(s,a) \in S \times A} \lp \left | \mathbf{z}_{sa}^\top \mathbf{w}_r - \E_{r | s, a} \lb r \rb \right |
    + \max_i | \mathbf{w}_i | \sum \left | \mathbf{z}_{s a}^\top W_p - \E_{s',a' | s, a} \lb 
    \mathbf{z}_{s'a'} \rb \right | \rp.  
\end{align}

\end{proof}

\begin{theorem}
Given functions $f(s)=\mathbf{z}_s$ and $g(\mathbf{z}_s, a)=\mathbf{z}_{sa}$, then if there exists functions $\hat p$ and $\hat R$ such that for all $(s,a) \in S \times A$: 
\begin{align} \label{eqn:appendix:optimal_pi}
    &\E_{\hat R} [ \hat R(\mathbf{z}_{sa}) ] = \E_R \lb R(s,a) \rb, 
    &\hat p(\mathbf{z}_{s'}| \mathbf{z}_{sa}) = \sum_{\hat s : \mathbf{z}_{\hat s}=\mathbf{z}_{s'}} p(\hat s|s, a),
\end{align}
then for any policy~$\pi$ where there exists a corresponding policy~$\hat \pi(a | \mathbf{z}_s) = \pi(a | s)$, there exists a function~$\hat Q$ equal to the true value function $Q^\pi$ over all possible state-action pairs~$(s,a) \in S \times A$:
    \begin{equation}
    \hat Q(\mathbf{z}_{sa}) = Q^\pi(s,a).    
    \end{equation} 
Furthermore, \ref{eqn:appendix:optimal_pi} guarantees the existence of an optimal policy~$\hat \pi^*(a | \mathbf{z}_s) = \pi^*(a|s)$. 
\end{theorem}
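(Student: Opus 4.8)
The plan is to exhibit the embedding as an MDP homomorphism onto an abstract MDP and then invoke the standard fact that such homomorphisms preserve value functions. Concretely, define the abstract MDP $\hat M := (\hat S, A, \hat p, \hat R, \y)$ with $\hat S := \{\mathbf{z}_s : s \in S\}$ (actions are not abstracted here), and let $L$ be the ``lift'' operator sending a bounded function $h$ of $\mathbf{z}_{sa}$ to $(Lh)(s,a) := h(\mathbf{z}_{sa})$. Fix a policy $\pi$ with abstract counterpart $\hat\pi(a\mid\mathbf{z}_s) = \pi(a\mid s)$; note this equality implicitly forces $\pi$ to be constant on each class $\{s : \mathbf{z}_s = z\}$, which is precisely what ``there exists a corresponding policy'' grants. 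Let $T_\pi$ be the Bellman evaluation operator on functions of $(s,a)$ and $\hat T_{\hat\pi}$ the analogous operator built from $\hat R$, $\hat p$, $\hat\pi$, and $g$ on functions of $\mathbf{z}_{sa}$.

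The crux is the commutation identity $L\,\hat T_{\hat\pi}\,h = T_\pi\,L\,h$. I would expand $T_\pi(Lh)(s,a) = \E_R[R(s,a)] + \y\sum_{s'}p(s'\mid s,a)\sum_{a'}\pi(a'\mid s')\,h(\mathbf{z}_{s'a'})$, then group the sum over $s'$ by the value of $\mathbf{z}_{s'}$: since $\pi(a'\mid s') = \hat\pi(a'\mid\mathbf{z}_{s'})$ and $\mathbf{z}_{s'a'} = g(\mathbf{z}_{s'},a')$ see $s'$ only through $\mathbf{z}_{s'}$, this equals $\E_R[R(s,a)] + \y\sum_{\mathbf{z}_{s'}}\big(\sum_{\hat s:\mathbf{z}_{\hat s}=\mathbf{z}_{s'}}p(\hat s\mid s,a)\big)\sum_{a'}\hat\pi(a'\mid\mathbf{z}_{s'})\,h(g(\mathbf{z}_{s'},a'))$. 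Now the two hypotheses in \ref{eqn:optimal_pi} replace $\E_R[R(s,a)]$ by $\E_{\hat R}[\hat R(\mathbf{z}_{sa})]$ and the bracketed marginal by $\hat p(\mathbf{z}_{s'}\mid\mathbf{z}_{sa})$, which is exactly $(\hat T_{\hat\pi}h)(\mathbf{z}_{sa}) = (L\hat T_{\hat\pi}h)(s,a)$. Since $\hat T_{\hat\pi}$ is a $\y$-contraction it has a unique fixed point $\hat Q$; then $T_\pi(L\hat Q) = L\hat T_{\hat\pi}\hat Q = L\hat Q$, so $L\hat Q$ is the unique fixed point $Q^\pi$ of $T_\pi$, giving $\hat Q(\mathbf{z}_{sa}) = Q^\pi(s,a)$ for all $(s,a)$. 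Equivalently, one can prove $\hat Q_n(\mathbf{z}_{sa}) = Q^\pi_n(s,a)$ for the $n$-step truncations by induction on $n$ using the same grouping step, then let $n\to\infty$.

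For the ``furthermore'' claim I would rerun the argument with the Bellman \emph{optimality} operators: the grouping still goes through because $\max_{a'}\hat Q^*_n(\mathbf{z}_{s'a'})$, like the $\hat\pi$-average, depends on $s'$ only through $\mathbf{z}_{s'}$; hence the abstract optimal value obeys $\hat Q^*(\mathbf{z}_{sa}) = Q^*(s,a)$. Then $\pi^*(a\mid s) \in \argmax_a Q^*(s,a) = \argmax_a \hat Q^*(\mathbf{z}_{sa})$ depends on $s$ only through $\mathbf{z}_s$, so $\hat\pi^*(a\mid\mathbf{z}_s) := \argmax_a \hat Q^*(\mathbf{z}_{sa})$ is a valid abstract policy with $\hat\pi^*(a\mid\mathbf{z}_s) = \pi^*(a\mid s)$, and the first part applied to $\pi^*$ certifies its optimality.

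I expect the main obstacle to be bookkeeping rather than conceptual: making the ``group the transition sum by $\mathbf{z}_{s'}$'' step airtight requires checking that $\pi$ (hence $\hat\pi$) is genuinely well-defined on equivalence classes and that $g$ reads only $\mathbf{z}_s$, and---for continuous $S$---replacing sums by integrals and verifying the marginalization in \ref{eqn:optimal_pi} is the correct pushforward. The one genuinely substantive observation is that only the \emph{expected} reward needs to match, since $Q^\pi$ depends on the reward distribution solely through its mean.
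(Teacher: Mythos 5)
Your proposal is correct and takes essentially the same route as the paper: the commutation identity you establish---grouping the sum over next states by the value of $\mathbf{z}_{s'}$, using that $\pi(a'\mid s')=\hat\pi(a'\mid\mathbf{z}_{s'})$ and $\mathbf{z}_{s'a'}=g(\mathbf{z}_{s'},a')$ depend on $s'$ only through $\mathbf{z}_{s'}$, then substituting the reward and dynamics conditions---is exactly the computation the paper performs inside its induction on the horizon-$n$ truncations $Q^\pi_n$ and $\hat Q_n$. Wrapping it up via Bellman-operator commutation and uniqueness of the contraction fixed point rather than taking the limit of the truncations (which you yourself note as an equivalent path) is only a cosmetic difference, and your handling of the optimality operator and the greedy $\hat\pi^*$ mirrors the paper's second induction for the optimal-policy claim.
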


\begin{proof}

Let 
\begin{align}
Q^\pi_h(s,a) &= \sum_{t=0}^h \y^t \E_\pi [ R(s_t, a_t) |s_0=s, a_0=a ] \\
\hat Q_h (\mathbf{z}_{sa}) &= \sum_{t=0}^h \y^t \E_\pi [ \hat R(\mathbf{z}_{s_t a_t}) |s_0=s, a_0=a ] 
\end{align}

Then
\begin{align}
    Q^\pi_0(s,a) &= \E_R [ R(s,a) ] \\
    &= \E_{\hat R} [ \hat R (\mathbf{z}_{sa}) ] \\
    &= \hat Q_0(\mathbf{z}_{sa}).
\end{align}

Assuming $Q^\pi_{n-1}(s,a) = \hat Q_{n-1}(\mathbf{z}_{sa})$ then noting that $\hat p(\mathbf{z} | \mathbf{z}_{sa})=0$ if $\mathbf{z}$ that is not in the image of $f(s) = \mathbf{z}_s$. 
\begin{align}
    Q^\pi_n(s,a) &= \E_R [ R(s,a) ] + \y \E_{s',a'} [ Q^\pi_{n-1}(s',a') ] \\
    &= \E_{\hat R} [ \hat R(s,a) ] + \y \E_{s',a'} [ \hat Q_{n-1}(\mathbf{z}_{s'a'}) ] \\
    &= \E_{\hat R} [ \hat R(s,a) ] + \y \sum_{s'} \sum_{a'} p(s'|s,a) \pi(a'|s') \hat Q_{n-1}(\mathbf{z}_{s'a'}) \\
    &= \E_{\hat R} [ \hat R(s,a) ] + \y \sum_{z_{s'}} \sum_{a'} \hat p(\mathbf{z}_{s'}|\mathbf{z}_{sa}) \hat \pi(a'|\mathbf{z}_{s'}) \hat Q_{n-1}(\mathbf{z}_{s'a'}) \\
    &= \hat Q_n (\mathbf{z}_{sa}).
\end{align}
Thus $\hat Q (\mathbf{z}_{sa}) = \lim_{n \rightarrow \infty} \hat Q_n (\mathbf{z}_{sa})$ exists, as $\hat Q_n$ can be defined as a function of $\hat p$, $\hat R$, and $\hat \pi$ for all~$n$.  

Similarly, let $\pi$ be an optimal policy. Repeating the same arguments we see that
\begin{align}
    Q^\pi_n(s,a) &= \E_R [ R(s,a) ] + \y \E_{s',a'} [ Q^\pi_{n-1}(s',a') ] \\
    &= \E_R [ R(s,a) ] + \y \sum_{s'} p(s'|s,a) \max_{a'} Q^\pi_{n-1}(s',a') \\
    &= \E_{\hat R} [ \hat R(s,a) ] + \y \sum_{z_{s'}} \hat p(\mathbf{z}_{s'}|\mathbf{z}_{sa})  \max_{a'} \hat Q_{n-1}(\mathbf{z}_{s'a'}) \\
    &= \hat Q_n (\mathbf{z}_{sa}).
\end{align}
Thus there exists a function $\hat Q (g(\mathbf{z}_s, a))=Q^*(s,a)$, consequently, there exists an optimal policy $\hat \pi^*(a|\mathbf{z}_s) = \argmax_a \hat Q(s,a)$.

\end{proof}

\clearpage

\section{Experimental Details} \label{appendix:exp_details}

\subsection{Hyperparameters}

\begin{table}[ht]
\caption{\textbf{MR.Q Hyperparameters.} Hyperparameters values are kept fixed across all benchmarks.}\label{table:hyperparameters}
\centering
\small
\begin{tabular}{cll}
\toprule
& Hyperparameter & Value \\
\midrule
& Dynamics loss weight $\lambda_\text{Dynamics}$ & $1$ \\
& Reward loss weight $\lambda_\text{Reward}$ & $0.1$ \\
& Terminal loss weight $\lambda_\text{Terminal}$ & $0.1$ \\
& Pre-activation loss weight $\lambda_\text{pre-activ}$ & $1\text{e}-5$ \\
& Encoder horizon $H_\text{Enc}$ & $5$ \\
& Multi-step returns horizon $H_Q$ & $3$ \\ 
\midrule
\multirow{2}{*}{\shortstack{TD3\\\citep{fujimoto2018addressing}}} 
& Target policy noise~$\sigma$      & $\N(0,0.2^2)$ \\
& Target policy noise clipping~$c$  & $(-0.3, 0.3)$ \\
\midrule
\multirow{2}{*}{\shortstack{LAP\\\citep{fujimoto2020equivalence}}} 
& Probability smoothing $\alpha$    & $0.4$ \\
& Minimum priority                  & $1$ \\
\midrule
\multirow{2}{*}{Exploration} & Initial random exploration time steps & $10$k \\
& Exploration noise    & $\N(0,0.2^2)$ \\
\midrule
\multirow{5}{*}{Common}
& Discount factor~$\y$      & $0.99$ \\
& Replay buffer capacity    & $1$M \\
& Mini-batch size           & $256$ \\
& Target update frequency~$T_\text{target}$   & $250$ \\
& Replay ratio              & $1$ \\ 
\midrule
\multirow{12}{*}{Encoder Network} 
& Optimizer        & AdamW~\citep{loshchilov2018decoupled} \\
& Learning rate    & $1\text{e}-4$ \\
& Weight decay     & $1\text{e}-4$ \\
& $\mathbf{z}_s$ dim & $512$ \\
& $\mathbf{z}_{sa}$ dim & $512$ \\
& $\mathbf{z}_a$ dim (only used within architecture) & $256$ \\
& Hidden dim & $512$ \\
& Activation function & ELU~\citep{clevert2015fast} \\
& Weight initialization & Xavier uniform~\citep{glorot2010understanding}
 \\
& Bias initialization & $0$ \\
& Reward bins & $65$ \\
& Reward range & $[-10,10]$ (effective: $[-22\text{k}, 22\text{k}]$) \\
\midrule
\multirow{7}{*}{Value Network} 
& Optimizer        & AdamW \\
& Learning rate    & $3\text{e}-4$ \\
& Hidden dim & $512$ \\
& Activation function & ELU \\
& Weight initialization & Xavier uniform \\
& Bias initialization & $0$ \\
& Gradient clip norm & $20$ \\
\midrule
\multirow{7}{*}{Policy Network} 
& Optimizer        & AdamW \\
& Learning rate    & $3\text{e}-4$ \\
& Hidden dim & $512$ \\
& Activation function & ReLU \\
& Weight initialization & Xavier uniform \\
& Bias initialization & $0$ \\
& Gumbel-Softmax~$\tau$~\citep{jang2017categorical} & $10$ \\ 
\bottomrule
\end{tabular}
\end{table} %

\clearpage

\subsection{Network Architecture}

This section describes the networks used in our method using PyTorch code blocks~\citep{paszke2019pytorch}. The state encoder and state-action encoder are described as separate networks for clarity but are trained end-to-end as a single network. The value and policy networks are trained independently from the encoders.

\textbf{Preamble}
\begin{lstlisting}
import torch
import torch.nn as nn
import torch.nn.functional as F
from functools import partial

zs_dim = 512
za_dim = 256
zsa_dim = 512

def ln_activ(self, x):
    x = F.layer_norm(x, (x.shape[-1],))
    return self.activ(x)
\end{lstlisting}

\textbf{State Encoder $f$ Network}

For image inputs, four convolutional layers are used, each with $32$ output channels, kernel size of~$3$, strides of $(2,2,2,1)$, and ELU activations~\citep{clevert2015fast}. The convolutional layers are followed by a linear layer taking in the flattened output followed by LayerNorm~\citep{ba2016layer} and a final ELU activation.

For vector inputs, a three layer multilayer perceptron (MLP) is used, with hidden dimension $512$ and LayerNorm followed by ELU activations after each layer.

The resulting state embedding $\mathbf{z}_s$ is trained end-to-end with the state-action encoder. It is also used downstream by the policy network (without propagating gradients).
\begin{lstlisting}
if image_observation_space:
    self.zs_cnn1 = nn.Conv2d(state_channels, 32, 3, stride=2)
    self.zs_cnn2 = nn.Conv2d(32, 32, 3, stride=2)
    self.zs_cnn3 = nn.Conv2d(32, 32, 3, stride=2)
    self.zs_cnn4 = nn.Conv2d(32, 32, 3, stride=1)
    # Assumes 84 x 84 input
    self.zs_lin = nn.Linear(1568, zs_dim)
else:
    self.zs_mlp1 = nn.Linear(state_dim, 512)
    self.zs_mlp2 = nn.Linear(512, 512)
    self.zs_mlp3 = nn.Linear(512, zs_dim)

self.activ = F.elu

def cnn_forward(self, state):
    state = state/255. - 0.5
    zs = self.activ(self.zs_cnn1(state))
    zs = self.activ(self.zs_cnn2(zs))
    zs = self.activ(self.zs_cnn3(zs))
    zs = self.activ(self.zs_cnn4(zs))
    zs = zs.reshape(batch_size, 1568)
    return ln_activ(self.zs_lin(zs))

def mlp_forward(self, state):
    zs = self.ln_activ(self.zs_mlp1(state))
    zs = self.ln_activ(self.zs_mlp2(zs))
    return self.ln_activ(self.zs_mlp3(zs))
\end{lstlisting}

\clearpage

\textbf{State-Action Encoder $g$ Network}

Action input is processed by a linear layer followed by an ELU activation. Afterwards, the processed action is concatenated with the state embedding and processed by a three layer MLP with hidden dimension $512$, and LayerNorm followed by ELU activations after the first two layers.

The resulting state-action embedding $\mathbf{z}_{sa}$ is used by a linear layer to make predictions about reward, the next state embedding, and the terminal signal. It is also used downstream by the value network (without propagating gradients).

\begin{lstlisting}
self.za = nn.Linear(action_dim, za_dim)
self.zsa1 = nn.Linear(zs_dim + za_dim, 512)
self.zsa2 = nn.Linear(512, 512)
self.zsa3 = nn.Linear(512, zsa_dim)
self.model = nn.Linear(zsa_dim, output_dim)
self.activ = F.elu

def forward(self, zs, action):
    za = self.activ(self.za(action))
    zsa = torch.cat([zs, za], 1)
    zsa = self.ln_activ(self.zsa1(zsa))
    zsa = self.ln_activ(self.zsa2(zsa))
    zsa = self.zsa3(zsa)
    return self.model(zsa), zsa
\end{lstlisting}

\textbf{Value $Q$ Networks}

The value network is a four layer MLP with hidden dimension $512$, and LayerNorm followed by ELU activations after the first three layers.

Two value networks are used with the same network and forward pass.

\begin{lstlisting}
self.l1 = nn.Linear(zsa_dim, 512)
self.l2 = nn.Linear(512, 512)
self.l3 = nn.Linear(512, 512)
self.l4 = nn.Linear(512, 1)
self.activ = F.elu

def forward(self, zsa):
    q = self.ln_activ(self.l1(zsa))
    q = self.ln_activ(self.l2(q))
    q = self.ln_activ(self.l3(q))
    return self.l4(q)
\end{lstlisting}

\textbf{Policy $\pi$ Network}

The policy network is a three layer MLP with hidden dimension $512$, and LayerNorm followed by ReLU activations after the first two layers.

For discrete actions, the final activation is the Gumbel Softmax with $\tau=10$. For continous actions, the final activation is a tanh function.
\begin{lstlisting}
self.l1 = nn.Linear(zs_dim, 512)
self.l2 = nn.Linear(hdim, 512)
self.l3 = nn.Linear(512, action_dim)
self.activ = F.relu

if discrete_action_space:
    self.final_activ = partial(F.gumbel_softmax, tau=10) 
else: 
    self.final_activ = torch.tanh

def forward(self, zs):
    a = self.ln_activ(self.l1(zs))
    a = self.ln_activ(self.l2(a))
    return self.final_activ(self.l3(a))
\end{lstlisting}

\clearpage

\subsection{Environments} \label{appendix:envs}

All main experiments were run for 10 seeds (the design study is based on 5 seeds). Evaluations are based on the average performance over 10 episodes, measured every 5k time steps for Gym and DM control and every 100k time steps for Atari. 

\textbf{Gym - Locomotion.} For the gym locomotion tasks~\citep{mujoco, OpenAIGym, towers2024gymnasium}, we choose the five most common environments that appear in prior work~\citep{fujimoto2018addressing, fujimoto2024sale, haarnoja2018soft, kuznetsov2020controlling}. We use the -v4 version. No preprocessing is applied. When aggregating scores, we use normalize with the TD3 scores obtained from TD7~\citep{fujimoto2024sale}:
\begin{equation}
    \text{TD3-Normalized}(x) := \frac{x - \text{random score}}{\text{TD3 score} - \text{random score}}.
\end{equation}

\begin{table}[ht]
\small
\centering
\begin{tabular}{lrr}
\toprule
& Random & TD3 \\
\midrule
Ant-v4 & -70.288 & \z3942\\
HalfCheetah-v4 & -289.415 & 10574\\
Hopper-v4 & 18.791 & \z3226\\
Humanoid-v4 & 120.423 & \z5165 \\
Walker2d-v4 & 2.791 & \z3946 \\
\bottomrule
\end{tabular}
\end{table}

\textbf{DM Control Suite.} For the DM control suite~\citep{tassa2018deepmind},  we choose the 28 default environments that appear either in the evaluation of TD-MPC2 or DreamerV3. We omit any custom environments included by the TD-MPC2 authors. The same subset of tasks are used in the evaluation of proprioceptive and visual control. Like prior work, for both observation spaces, we use an action repeat of $2$~\citep{hansen2024td}. For visual control, the state (network input) is composed of the previous $3$ observations which are resized to $84 \times 84$ pixels in RGB format~\citep{tassa2018deepmind}.

\textbf{Atari.} For the Atari games~\citep{bellemare2013arcade, OpenAIGym, towers2024gymnasium}, we use the 57 games in the Atari-57 benchmark that appears in prior work~\citep{hessel2018rainbow, schrittwieser2020mastering, badia2020agent57, hafner2023mastering}. For DQN and Rainbow, two games (Defender and Surround) are missing from the Dopamine framework~\citep{castro2018dopamine} and are omitted. We use the -v5 version. For MR.Q, we use the common preprocessing steps~\citep{DQN, machado2018revisiting, castro2018dopamine}, where an action repeat of $4$ is used and the observations are grayscaled, resized to $84 \times 84$ pixels and set to the max between the 3rd and 4th frame. The state~(network input) is composed of the previous $4$ observations.

Consider the $16$ frame sequence used by a single state, where $f_i$ is the $i$th grayscaled and resized frame and $o_j$ is the $j$th observation set to the max of two frames 
\begin{equation}
\overbracket[\fontdimen8\textfont3]{f_0, f_1, \underbracket[\fontdimen8\textfont3]{f_2, f_3}_{o_0 = \max (f_2, f_3)}
}^{\text{action } a_0}, 
\overbracket[\fontdimen8\textfont3]{f_4, f_5, \underbracket[\fontdimen8\textfont3]{f_6, f_7}_{o_1 = \max (f_6, f_7)}}^{\text{action } a_1},
\overbracket[\fontdimen8\textfont3]{f_8, f_9, \underbracket[\fontdimen8\textfont3]{f_{10}, f_{11}}_{o_2 = \max (f_{10}, f_{11})}}^{\text{action } a_2},
\overbracket[\fontdimen8\textfont3]{f_{12}, f_{13}, \underbracket[\fontdimen8\textfont3]{f_{14}, f_{15}}_{o_3 = \max (f_{14}, f_{15})}}^{\text{action } a_3}, 
\end{equation}
then the state is defined as follows: 
\begin{equation}
s = \begin{bmatrix}
       o_0 = \max (f_2, f_3) \phantom{_{11}} \\
       o_1 = \max (f_6, f_7) \phantom{_{11}} \\
       o_2 = \max (f_{10}, f_{11}) \\
       o_3 = \max (f_{14}, f_{15})
     \end{bmatrix}. 
\end{equation}

When aggregating scores, we normalize with Human scores obtained from \citep{wang2015dueling}:
\begin{equation}
    \text{Human-Normalized}(x) := \frac{x - \text{random score}}{\text{Human score} - \text{random score}}.
\end{equation}

\begin{table}[ht]
\small
\centering
\begin{tabular}{lrr}
\toprule
& Random & Human \\
\midrule
Alien & 227.8 & 7127.7 \\
Amidar & 5.8 & 1719.5 \\
Assault & 222.4 & 742.0 \\
Asterix & 210.0 & 8503.3 \\
Asteroids & 719.1 & 47388.7 \\
Atlantis & 12850.0 & 29028.1 \\
BankHeist & 14.2 & 753.1 \\
BattleZone & 2360.0 & 37187.5 \\
BeamRider & 363.9 & 16926.5 \\
Berzerk & 123.7 & 2630.4 \\
Bowling & 23.1 & 160.7 \\
Boxing & 0.1 & 12.1 \\
Breakout & 1.7 & 30.5 \\
Centipede & 2090.9 & 12017.0 \\
ChopperCommand & 811.0 & 7387.8 \\
CrazyClimber & 10780.5 & 35829.4 \\
Defender (not used) & 2874.5 & 18688.9 \\
DemonAttack & 152.1 & 1971.0 \\
DoubleDunk & -18.6 & -16.4 \\
Enduro & 0.0 & 860.5 \\
FishingDerby & -91.7 & -38.7 \\
Freeway & 0.0 & 29.6 \\
Frostbite & 65.2 & 4334.7 \\
Gopher & 257.6 & 2412.5 \\
Gravitar & 173.0 & 3351.4 \\
Hero & 1027.0 & 30826.4 \\
IceHockey & -11.2 & 0.9 \\
Jamesbond & 29.0 & 302.8 \\
Kangaroo & 52.0 & 3035.0 \\
Krull & 1598.0 & 2665.5 \\
KungFuMaster & 258.5 & 22736.3 \\
MontezumaRevenge & 0.0 & 4753.3 \\
MsPacman & 307.3 & 6951.6 \\
NameThisGame & 2292.3 & 8049.0 \\
Phoenix & 761.4 & 7242.6 \\
Pitfall & -229.4 & 6463.7 \\
Pong & -20.7 & 14.6 \\
PrivateEye & 24.9 & 69571.3 \\
Qbert & 163.9 & 13455.0 \\
Riverraid & 1338.5 & 17118.0 \\
RoadRunner & 11.5 & 7845.0 \\
Robotank & 2.2 & 11.9 \\
Seaquest & 68.4 & 42054.7 \\
Skiing & -17098.1 & -4336.9 \\
Solaris & 1236.3 & 12326.7 \\
SpaceInvaders & 148.0 & 1668.7 \\
StarGunner & 664.0 & 10250.0 \\
Surround (not used) & -10.0 & 6.5 \\
Tennis & -23.8 & -8.3 \\
TimePilot & 3568.0 & 5229.2 \\
Tutankham & 11.4 & 167.6 \\
UpNDown & 533.4 & 11693.2 \\
Venture & 0.0 & 1187.5 \\
VideoPinball & 16256.9 & 17667.9 \\
WizardOfWor & 563.5 & 4756.5 \\
YarsRevenge & 3092.9 & 54576.9 \\
Zaxxon & 32.5 & 9173.3 \\
\bottomrule
\end{tabular}
\end{table}

\clearpage

\subsection{Baselines}

\textbf{DreamerV3.} \citep{hafner2023mastering}. Results for Gym and DMC were obtained by re-running the authors' code (\url{https://github.com/danijar/dreamerv3} - Commit \textcolor{gray}{251910d04c9f38dd9dc385775bb0d6\-efa0e57a95}) over 10 seeds, using the author-suggested hyperparameters from the DMC benchmark. Code was modified slightly to match our evaluation protocol. Atari results are based on the authors' reported results. 

\textbf{DrQ-v2.} \citep{yarats2022mastering}. We use the authors' reported results whenever possible. For missing any results, we re-ran the authors' code (\url{https://github.com/facebookresearch/drqv2} - Commit \textcolor{gray}{c0c650b76c6e5d22a7eb5f2edffd1440fe94f8ef}) for 10 seeds.

\textbf{DQN.} \citep{DQN}. Results were obtained from the Dopamine framework~\citep{castro2018dopamine}.

\textbf{PPO.} \citep{ppo}. Results were gathered using Stable Baselines 3~\citep{stable-baselines3} and default hyperparameters. The default MLP policy was used for Gym and DMC-proprioceptive and the default CNN policy was used for DMC-visual and Atari.

\textbf{Rainbow.} \citep{hessel2018rainbow}. Results were obtained from the Dopamine framework~\citep{castro2018dopamine}.

\textbf{TD-MPC2.} \citep{hansen2024td}. Results for DMC were obtained by re-running the authors' code on their main branch (\url{https://github.com/nicklashansen/tdmpc2} - Commit \textcolor{gray}{5f6fadec0fec78304b4b53e8171d348b58cac486}). As the Gym environments include a termination signal, results for Gym were obtained by running their episodic branch (\url{https://github.com/nicklashansen/tdmpc2/tree/episodic-rl} - Commit \textcolor{gray}{3789fcd5b872079ad610fa3299ff47c3a427a04a}). All experiments were run for 10 seeds and use the default author-suggested hyperparameters for all tasks.

\textbf{TD7.} \citep{fujimoto2024sale}. Results for Gym were obtained from the authors. Results for DMC were obtained by re-running the authors' code (\url{https://github.com/sfujim/TD7} - Commit \textcolor{gray}{c1c280de1513f474488061b4cf39642b75dd84bd}) using our setup for DMC. All experiments use 10 seeds and use the default author-suggested hyperparameters from the Gym benchmark.

\subsection{Software Versions}

\begin{itemize}[nosep]
    \item Gymnasium 0.29.1~\citep{towers2024gymnasium}
    \item MuJoCo 3.2.2~\citep{mujoco}
    \item NumPy 2.1.1~\citep{harris2020array}
    \item Python 3.11.8~\citep{van1995python}
    \item PyTorch 2.4.1~\citep{paszke2019pytorch}
\end{itemize}

\clearpage

\section{Complete Main Results} \label{appendix:additional_results}

\subsection{Gym}

\begin{table}[ht]
\centering
\scriptsize
\setlength{\tabcolsep}{0pt}
\newcolumntype{Y}{>{\centering\arraybackslash}X} %
\caption{\textbf{Gym - Locomotion final results.} Final average performance at 1M time steps over 10 seeds. \tabledescript TD3-normalized score (see \cref{appendix:envs}).}
\begin{tabularx}{\textwidth}{X@{\hspace{4pt}} r@{}X@{\hspace{4pt}} r@{}X@{\hspace{4pt}} r@{}X@{\hspace{4pt}} r@{}X@{\hspace{4pt}} r@{}l@{}}
\toprule
Task & & \multicolumn{1}{l}{TD7} & & \multicolumn{1}{l}{PPO} & & \multicolumn{1}{l}{TD-MPC2} & & \multicolumn{1}{l}{DreamerV3} & & \multicolumn{1}{l}{MR.Q} \\
\midrule
Ant & 8509 & ~\textcolor{gray}{[8164, 8852]} & 1584 & ~\textcolor{gray}{[1355, 1802]} & 4751 & ~\textcolor{gray}{[3012, 6261]} & 1947 & ~\textcolor{gray}{[1121, 2751]} & 6901 & ~\textcolor{gray}{[6261, 7482]} \\
HalfCheetah & 17433 & ~\textcolor{gray}{[17284, 17550]} & 1744 & ~\textcolor{gray}{[1525, 2120]} & 15078 & ~\textcolor{gray}{[14050, 16012]} & 5502 & ~\textcolor{gray}{[3887, 7117]} & 12939 & ~\textcolor{gray}{[11663, 13762]} \\
Hopper & 3511 & ~\textcolor{gray}{[3245, 3746]} & 3022 & ~\textcolor{gray}{[2587, 3356]} & 2081 & ~\textcolor{gray}{[1233, 2916]} & 2666 & ~\textcolor{gray}{[2071, 3201]} & 2692 & ~\textcolor{gray}{[2131, 3309]} \\
Humanoid & 7428 & ~\textcolor{gray}{[7300, 7555]} & 477 & ~\textcolor{gray}{[431, 522]} & 6071 & ~\textcolor{gray}{[5767, 6327]} & 4217 & ~\textcolor{gray}{[2791, 5481]} & 10223 & ~\textcolor{gray}{[9929, 10498]} \\
Walker2d & 6096 & ~\textcolor{gray}{[5535, 6521]} & 2487 & ~\textcolor{gray}{[1875, 3067]} & 3008 & ~\textcolor{gray}{[1659, 4220]} & 4519 & ~\textcolor{gray}{[3746, 5190]} & 6039 & ~\textcolor{gray}{[5644, 6386]} \\
\midrule
Mean & 1.57 & ~\textcolor{gray}{[1.54, 1.60]} & 0.45 & ~\textcolor{gray}{[0.41, 0.48]} & 1.04 & ~\textcolor{gray}{[0.90, 1.16]} & 0.76 & ~\textcolor{gray}{[0.67, 0.85]} & 1.46 & ~\textcolor{gray}{[1.41, 1.52]} \\
Median & 1.55 & ~\textcolor{gray}{[1.45, 1.63]} & 0.41 & ~\textcolor{gray}{[0.36, 0.47]} & 1.18 & ~\textcolor{gray}{[0.80, 1.23]} & 0.81 & ~\textcolor{gray}{[0.56, 0.90]} & 1.53 & ~\textcolor{gray}{[1.43, 1.61]} \\
IQM & 1.54 & ~\textcolor{gray}{[1.49, 1.58]} & 0.41 & ~\textcolor{gray}{[0.35, 0.46]} & 1.05 & ~\textcolor{gray}{[0.87, 1.19]} & 0.72 & ~\textcolor{gray}{[0.62, 0.85]} & 1.50 & ~\textcolor{gray}{[1.44, 1.55]} \\
\bottomrule
\end{tabularx}
\end{table}

\begin{figure}[ht]
\centering
\includegraphics[width=\linewidth,trim=1.5in 7.15in 1.5in 1.15in,clip]{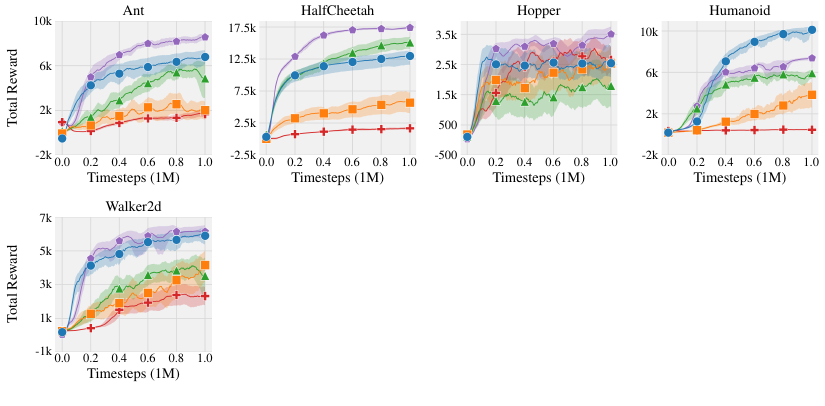}

\fcolorbox{gray!10}{gray!10}{
\small
\coloredcircle{sb_blue}{sb_blue!25!light_gray} MR.Q \quad
\coloredsquare{sb_orange}{sb_orange!25!light_gray} DreamerV3 \quad
\coloredtriangle{sb_green}{sb_green!25!light_gray} TD-MPC2 \quad
\coloredplus{sb_red}{sb_red!25!light_gray} PPO \quad
\coloredpoly{sb_purple}{sb_purple!25!light_gray} TD7
}

\captionof{figure}{\textbf{Gym - Locomotion learning curves.} Results are over 10 seeds. \figuredescript
} %
\end{figure}

\clearpage

\subsection{DMC - Proprioceptive}

\begin{table}[ht]
\centering
\scriptsize
\setlength{\tabcolsep}{0pt}
\newcolumntype{Y}{>{\centering\arraybackslash}X} %
\caption{\textbf{DMC - Proprioceptive final results.} Final average performance at 500k time steps (1M time steps in the original environment due to action repeat) over 10 seeds. \tabledescript default reward.}
\begin{tabularx}{\textwidth}{l@{\hspace{4pt}} r@{}X@{\hspace{4pt}} r@{}X@{\hspace{4pt}} r@{}X@{\hspace{4pt}} r@{}X@{\hspace{4pt}} r@{}l@{}}
\toprule
Task & & \multicolumn{1}{l}{TD7} & & \multicolumn{1}{l}{PPO} & & \multicolumn{1}{l}{TD-MPC2} & & \multicolumn{1}{l}{DreamerV3} & & \multicolumn{1}{l}{MR.Q} \\
\midrule
acrobot-swingup & 58 & ~\textcolor{gray}{[38, 75]} & 39 & ~\textcolor{gray}{[33, 45]} & 584 & ~\textcolor{gray}{[551, 615]} & 230 & ~\textcolor{gray}{[193, 266]} & 567 & ~\textcolor{gray}{[523, 616]} \\
ball\_in\_cup-catch & 983 & ~\textcolor{gray}{[981, 985]} & 769 & ~\textcolor{gray}{[689, 841]} & 984 & ~\textcolor{gray}{[982, 986]} & 968 & ~\textcolor{gray}{[965, 973]} & 981 & ~\textcolor{gray}{[979, 984]} \\
cartpole-balance & 999 & ~\textcolor{gray}{[998, 1000]} & 999 & ~\textcolor{gray}{[1000, 1000]} & 996 & ~\textcolor{gray}{[995, 998]} & 998 & ~\textcolor{gray}{[997, 1000]} & 999 & ~\textcolor{gray}{[999, 1000]} \\
cartpole-balance\_sparse & 1000 & ~\textcolor{gray}{[1000, 1000]} & 1000 & ~\textcolor{gray}{[1000, 1000]} & 1000 & ~\textcolor{gray}{[1000, 1000]} & 999 & ~\textcolor{gray}{[1000, 1000]} & 1000 & ~\textcolor{gray}{[1000, 1000]} \\
cartpole-swingup & 869 & ~\textcolor{gray}{[866, 873]} & 776 & ~\textcolor{gray}{[661, 853]} & 875 & ~\textcolor{gray}{[870, 880]} & 736 & ~\textcolor{gray}{[591, 838]} & 866 & ~\textcolor{gray}{[866, 866]} \\
cartpole-swingup\_sparse & 573 & ~\textcolor{gray}{[333, 806]} & 391 & ~\textcolor{gray}{[159, 625]} & 845 & ~\textcolor{gray}{[839, 849]} & 702 & ~\textcolor{gray}{[560, 792]} & 798 & ~\textcolor{gray}{[780, 818]} \\
cheetah-run & 821 & ~\textcolor{gray}{[642, 913]} & 269 & ~\textcolor{gray}{[247, 295]} & 917 & ~\textcolor{gray}{[915, 920]} & 699 & ~\textcolor{gray}{[655, 744]} & 914 & ~\textcolor{gray}{[911, 917]} \\
dog-run & 69 & ~\textcolor{gray}{[36, 101]} & 26 & ~\textcolor{gray}{[26, 28]} & 265 & ~\textcolor{gray}{[166, 342]} & 4 & ~\textcolor{gray}{[4, 5]} & 569 & ~\textcolor{gray}{[547, 595]} \\
dog-stand & 582 & ~\textcolor{gray}{[432, 741]} & 129 & ~\textcolor{gray}{[122, 139]} & 506 & ~\textcolor{gray}{[266, 715]} & 22 & ~\textcolor{gray}{[20, 27]} & 967 & ~\textcolor{gray}{[960, 975]} \\
dog-trot & 21 & ~\textcolor{gray}{[13, 30]} & 31 & ~\textcolor{gray}{[30, 34]} & 407 & ~\textcolor{gray}{[265, 530]} & 10 & ~\textcolor{gray}{[6, 17]} & 877 & ~\textcolor{gray}{[845, 898]} \\
dog-walk & 52 & ~\textcolor{gray}{[19, 116]} & 40 & ~\textcolor{gray}{[37, 43]} & 486 & ~\textcolor{gray}{[240, 704]} & 17 & ~\textcolor{gray}{[15, 21]} & 916 & ~\textcolor{gray}{[908, 924]} \\
finger-spin & 335 & ~\textcolor{gray}{[99, 596]} & 459 & ~\textcolor{gray}{[420, 497]} & 986 & ~\textcolor{gray}{[986, 988]} & 666 & ~\textcolor{gray}{[577, 763]} & 937 & ~\textcolor{gray}{[917, 956]} \\
finger-turn\_easy & 912 & ~\textcolor{gray}{[774, 983]} & 182 & ~\textcolor{gray}{[153, 211]} & 979 & ~\textcolor{gray}{[975, 983]} & 906 & ~\textcolor{gray}{[883, 927]} & 953 & ~\textcolor{gray}{[931, 974]} \\
finger-turn\_hard & 470 & ~\textcolor{gray}{[199, 727]} & 58 & ~\textcolor{gray}{[35, 79]} & 947 & ~\textcolor{gray}{[916, 977]} & 864 & ~\textcolor{gray}{[812, 900]} & 950 & ~\textcolor{gray}{[910, 974]} \\
fish-swim & 86 & ~\textcolor{gray}{[64, 120]} & 103 & ~\textcolor{gray}{[84, 128]} & 659 & ~\textcolor{gray}{[615, 706]} & 813 & ~\textcolor{gray}{[808, 819]} & 792 & ~\textcolor{gray}{[773, 810]} \\
hopper-hop & 87 & ~\textcolor{gray}{[25, 160]} & 10 & ~\textcolor{gray}{[0, 23]} & 425 & ~\textcolor{gray}{[368, 500]} & 116 & ~\textcolor{gray}{[66, 165]} & 251 & ~\textcolor{gray}{[195, 301]} \\
hopper-stand & 670 & ~\textcolor{gray}{[466, 829]} & 128 & ~\textcolor{gray}{[56, 216]} & 952 & ~\textcolor{gray}{[944, 958]} & 747 & ~\textcolor{gray}{[669, 806]} & 951 & ~\textcolor{gray}{[948, 955]} \\
humanoid-run & 57 & ~\textcolor{gray}{[23, 92]} & 0 & ~\textcolor{gray}{[1, 1]} & 181 & ~\textcolor{gray}{[121, 231]} & 0 & ~\textcolor{gray}{[1, 1]} & 200 & ~\textcolor{gray}{[170, 236]} \\
humanoid-stand & 317 & ~\textcolor{gray}{[117, 516]} & 5 & ~\textcolor{gray}{[5, 6]} & 658 & ~\textcolor{gray}{[506, 745]} & 5 & ~\textcolor{gray}{[5, 6]} & 868 & ~\textcolor{gray}{[822, 903]} \\
humanoid-walk & 176 & ~\textcolor{gray}{[42, 320]} & 1 & ~\textcolor{gray}{[1, 2]} & 754 & ~\textcolor{gray}{[725, 791]} & 1 & ~\textcolor{gray}{[1, 2]} & 662 & ~\textcolor{gray}{[610, 724]} \\
pendulum-swingup & 500 & ~\textcolor{gray}{[251, 743]} & 115 & ~\textcolor{gray}{[70, 164]} & 846 & ~\textcolor{gray}{[830, 862]} & 774 & ~\textcolor{gray}{[740, 802]} & 748 & ~\textcolor{gray}{[597, 829]} \\
quadruped-run & 645 & ~\textcolor{gray}{[567, 713]} & 144 & ~\textcolor{gray}{[122, 170]} & 942 & ~\textcolor{gray}{[938, 947]} & 130 & ~\textcolor{gray}{[92, 169]} & 947 & ~\textcolor{gray}{[940, 954]} \\
quadruped-walk & 949 & ~\textcolor{gray}{[939, 957]} & 122 & ~\textcolor{gray}{[103, 142]} & 963 & ~\textcolor{gray}{[959, 967]} & 193 & ~\textcolor{gray}{[137, 243]} & 963 & ~\textcolor{gray}{[959, 967]} \\
reacher-easy & 970 & ~\textcolor{gray}{[951, 982]} & 367 & ~\textcolor{gray}{[188, 558]} & 983 & ~\textcolor{gray}{[980, 986]} & 966 & ~\textcolor{gray}{[964, 970]} & 983 & ~\textcolor{gray}{[983, 985]} \\
reacher-hard & 898 & ~\textcolor{gray}{[861, 936]} & 125 & ~\textcolor{gray}{[40, 234]} & 960 & ~\textcolor{gray}{[936, 979]} & 919 & ~\textcolor{gray}{[864, 955]} & 977 & ~\textcolor{gray}{[975, 980]} \\
walker-run & 804 & ~\textcolor{gray}{[783, 825]} & 97 & ~\textcolor{gray}{[91, 104]} & 854 & ~\textcolor{gray}{[851, 859]} & 510 & ~\textcolor{gray}{[430, 588]} & 793 & ~\textcolor{gray}{[765, 815]} \\
walker-stand & 983 & ~\textcolor{gray}{[974, 989]} & 431 & ~\textcolor{gray}{[363, 495]} & 991 & ~\textcolor{gray}{[990, 994]} & 941 & ~\textcolor{gray}{[934, 948]} & 988 & ~\textcolor{gray}{[987, 990]} \\
walker-walk & 977 & ~\textcolor{gray}{[975, 980]} & 283 & ~\textcolor{gray}{[253, 312]} & 981 & ~\textcolor{gray}{[979, 984]} & 898 & ~\textcolor{gray}{[875, 919]} & 978 & ~\textcolor{gray}{[978, 980]} \\
\midrule
Mean & 566 & ~\textcolor{gray}{[544, 590]} & 254 & ~\textcolor{gray}{[241, 267]} & 783 & ~\textcolor{gray}{[769, 797]} & 530 & ~\textcolor{gray}{[520, 539]} & 835 & ~\textcolor{gray}{[829, 842]} \\
Median & 613 & ~\textcolor{gray}{[548, 718]} & 127 & ~\textcolor{gray}{[112, 145]} & 896 & ~\textcolor{gray}{[893, 899]} & 700 & ~\textcolor{gray}{[644, 741]} & 927 & ~\textcolor{gray}{[914, 934]} \\
IQM & 612 & ~\textcolor{gray}{[569, 657]} & 154 & ~\textcolor{gray}{[135, 167]} & 868 & ~\textcolor{gray}{[860, 880]} & 577 & ~\textcolor{gray}{[557, 594]} & 907 & ~\textcolor{gray}{[903, 914]} \\
\bottomrule
\end{tabularx}
\end{table}

\begin{figure}[ht]
\centering
\includegraphics[width=\linewidth,trim=1.5in 1.75in 1.5in 1.15in,clip]{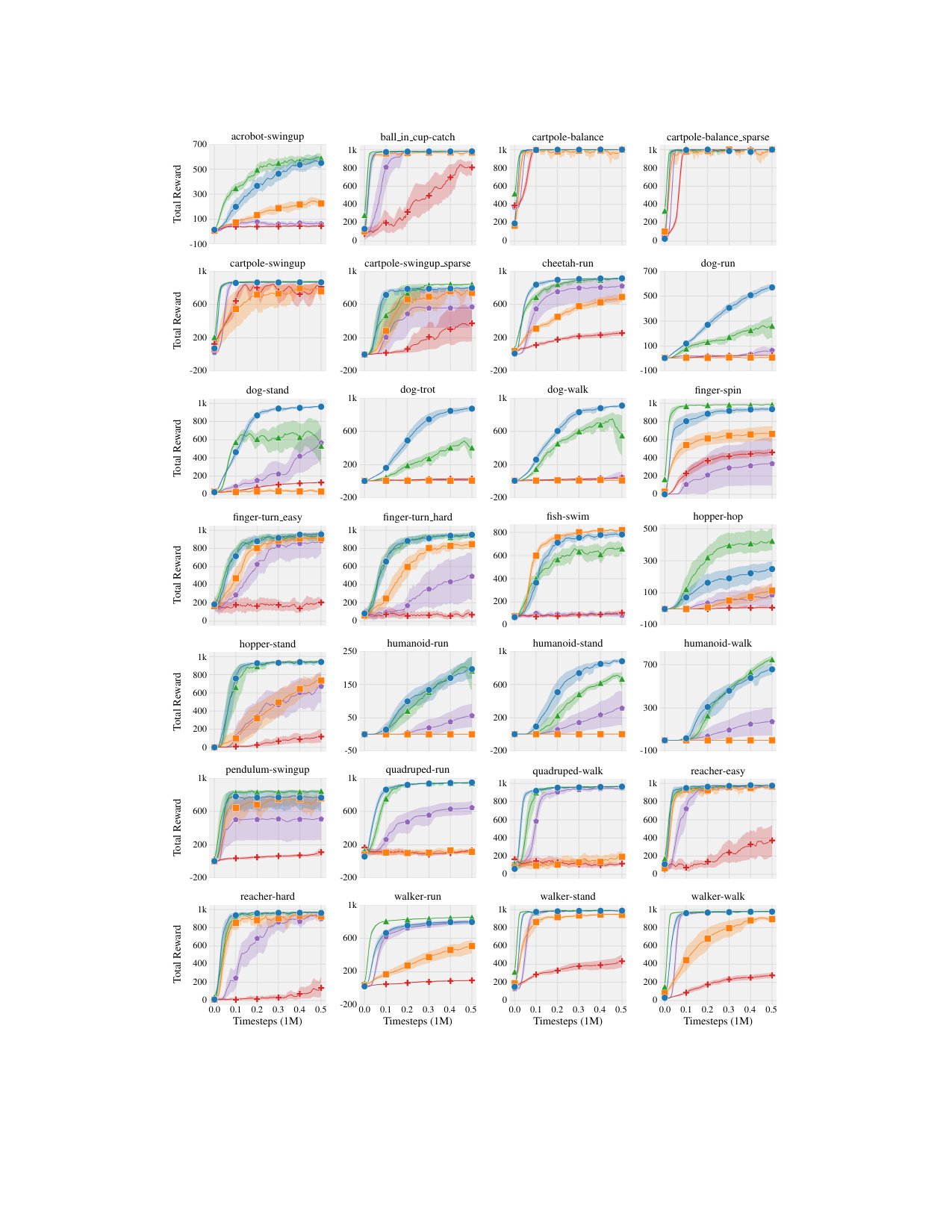} 

\fcolorbox{gray!10}{gray!10}{
\small
\coloredcircle{sb_blue}{sb_blue!25!light_gray} MR.Q \quad
\coloredsquare{sb_orange}{sb_orange!25!light_gray} DreamerV3 \quad
\coloredtriangle{sb_green}{sb_green!25!light_gray} TD-MPC2 \quad
\coloredplus{sb_red}{sb_red!25!light_gray} PPO \quad
\coloredpoly{sb_purple}{sb_purple!25!light_gray} TD7
}

\captionof{figure}{\textbf{DMC - Proprioceptive learning curves.} Time steps consider the number of environment interactions, where 500k time steps equals 1M frames in the original environment. Results are over 10 seeds. \figuredescript
} %
\end{figure}

\clearpage

\subsection{DMC - Visual}

\begin{table}[ht]
\centering
\scriptsize
\setlength{\tabcolsep}{0pt}
\newcolumntype{Y}{>{\centering\arraybackslash}X} %
\caption{\textbf{DMC - Visual final results.} Final average performance at 500k time steps (1M time steps in the original environment due to action repeat) over 10 seeds. \tabledescript default reward.}
\begin{tabularx}{\textwidth}{l@{\hspace{4pt}} r@{}X@{\hspace{4pt}} r@{}X@{\hspace{4pt}} r@{}X@{\hspace{4pt}} r@{}X@{\hspace{4pt}} r@{}l@{}}
\toprule
Task & & \multicolumn{1}{l}{DrQ-v2} & & \multicolumn{1}{l}{PPO} & & \multicolumn{1}{l}{TD-MPC2} & & \multicolumn{1}{l}{DreamerV3} & & \multicolumn{1}{l}{MR.Q} \\
\midrule
acrobot-swingup & 168 & ~\textcolor{gray}{[127, 219]} & 2 & ~\textcolor{gray}{[1, 4]} & 197 & ~\textcolor{gray}{[179, 217]} & 121 & ~\textcolor{gray}{[106, 145]} & 287 & ~\textcolor{gray}{[254, 316]} \\
ball\_in\_cup-catch & 909 & ~\textcolor{gray}{[821, 973]} & 105 & ~\textcolor{gray}{[5, 282]} & 932 & ~\textcolor{gray}{[899, 961]} & 971 & ~\textcolor{gray}{[969, 973]} & 977 & ~\textcolor{gray}{[975, 980]} \\
cartpole-balance & 993 & ~\textcolor{gray}{[990, 996]} & 353 & ~\textcolor{gray}{[231, 485]} & 972 & ~\textcolor{gray}{[948, 991]} & 998 & ~\textcolor{gray}{[997, 1000]} & 999 & ~\textcolor{gray}{[999, 999]} \\
cartpole-balance\_sparse & 962 & ~\textcolor{gray}{[887, 1000]} & 487 & ~\textcolor{gray}{[233, 751]} & 1000 & ~\textcolor{gray}{[1000, 1000]} & 999 & ~\textcolor{gray}{[999, 1000]} & 1000 & ~\textcolor{gray}{[1000, 1000]} \\
cartpole-swingup & 864 & ~\textcolor{gray}{[854, 873]} & 596 & ~\textcolor{gray}{[437, 723]} & 690 & ~\textcolor{gray}{[521, 813]} & 725 & ~\textcolor{gray}{[603, 807]} & 868 & ~\textcolor{gray}{[860, 875]} \\
cartpole-swingup\_sparse & 774 & ~\textcolor{gray}{[741, 805]} & 0 & ~\textcolor{gray}{[0, 0]} & 636 & ~\textcolor{gray}{[404, 804]} & 547 & ~\textcolor{gray}{[351, 726]} & 797 & ~\textcolor{gray}{[777, 816]} \\
cheetah-run & 728 & ~\textcolor{gray}{[701, 753]} & 155 & ~\textcolor{gray}{[110, 210]} & 431 & ~\textcolor{gray}{[267, 556]} & 618 & ~\textcolor{gray}{[576, 661]} & 775 & ~\textcolor{gray}{[752, 807]} \\
dog-run & 10 & ~\textcolor{gray}{[9, 12]} & 11 & ~\textcolor{gray}{[9, 14]} & 14 & ~\textcolor{gray}{[10, 18]} & 9 & ~\textcolor{gray}{[6, 14]} & 60 & ~\textcolor{gray}{[44, 80]} \\
dog-stand & 43 & ~\textcolor{gray}{[37, 49]} & 51 & ~\textcolor{gray}{[48, 56]} & 117 & ~\textcolor{gray}{[72, 148]} & 61 & ~\textcolor{gray}{[30, 92]} & 216 & ~\textcolor{gray}{[201, 232]} \\
dog-trot & 14 & ~\textcolor{gray}{[11, 18]} & 13 & ~\textcolor{gray}{[12, 15]} & 20 & ~\textcolor{gray}{[14, 25]} & 14 & ~\textcolor{gray}{[13, 16]} & 65 & ~\textcolor{gray}{[55, 79]} \\
dog-walk & 22 & ~\textcolor{gray}{[18, 29]} & 16 & ~\textcolor{gray}{[14, 18]} & 22 & ~\textcolor{gray}{[17, 28]} & 11 & ~\textcolor{gray}{[11, 12]} & 77 & ~\textcolor{gray}{[71, 83]} \\
finger-spin & 860 & ~\textcolor{gray}{[787, 922]} & 241 & ~\textcolor{gray}{[107, 377]} & 786 & ~\textcolor{gray}{[492, 984]} & 656 & ~\textcolor{gray}{[544, 765]} & 965 & ~\textcolor{gray}{[938, 982]} \\
finger-turn\_easy & 503 & ~\textcolor{gray}{[399, 615]} & 189 & ~\textcolor{gray}{[144, 233]} & 562 & ~\textcolor{gray}{[317, 779]} & 491 & ~\textcolor{gray}{[447, 542]} & 953 & ~\textcolor{gray}{[927, 974]} \\
finger-turn\_hard & 223 & ~\textcolor{gray}{[121, 340]} & 60 & ~\textcolor{gray}{[1, 120]} & 903 & ~\textcolor{gray}{[870, 940]} & 494 & ~\textcolor{gray}{[401, 571]} & 932 & ~\textcolor{gray}{[905, 957]} \\
fish-swim & 84 & ~\textcolor{gray}{[65, 107]} & 77 & ~\textcolor{gray}{[64, 92]} & 43 & ~\textcolor{gray}{[21, 64]} & 90 & ~\textcolor{gray}{[84, 96]} & 79 & ~\textcolor{gray}{[68, 93]} \\
hopper-hop & 224 & ~\textcolor{gray}{[170, 278]} & 0 & ~\textcolor{gray}{[0, 0]} & 187 & ~\textcolor{gray}{[119, 238]} & 205 & ~\textcolor{gray}{[125, 287]} & 270 & ~\textcolor{gray}{[230, 315]} \\
hopper-stand & 917 & ~\textcolor{gray}{[903, 931]} & 1 & ~\textcolor{gray}{[0, 2]} & 582 & ~\textcolor{gray}{[321, 794]} & 888 & ~\textcolor{gray}{[875, 900]} & 852 & ~\textcolor{gray}{[703, 930]} \\
humanoid-run & 1 & ~\textcolor{gray}{[1, 1]} & 1 & ~\textcolor{gray}{[1, 1]} & 0 & ~\textcolor{gray}{[1, 1]} & 1 & ~\textcolor{gray}{[1, 1]} & 1 & ~\textcolor{gray}{[1, 2]} \\
humanoid-stand & 6 & ~\textcolor{gray}{[7, 7]} & 6 & ~\textcolor{gray}{[6, 7]} & 5 & ~\textcolor{gray}{[5, 7]} & 5 & ~\textcolor{gray}{[5, 7]} & 7 & ~\textcolor{gray}{[7, 8]} \\
humanoid-walk & 2 & ~\textcolor{gray}{[2, 2]} & 1 & ~\textcolor{gray}{[1, 1]} & 1 & ~\textcolor{gray}{[1, 2]} & 1 & ~\textcolor{gray}{[2, 2]} & 2 & ~\textcolor{gray}{[2, 3]} \\
pendulum-swingup & 838 & ~\textcolor{gray}{[813, 861]} & 0 & ~\textcolor{gray}{[0, 1]} & 748 & ~\textcolor{gray}{[574, 850]} & 761 & ~\textcolor{gray}{[709, 807]} & 829 & ~\textcolor{gray}{[816, 842]} \\
quadruped-run & 459 & ~\textcolor{gray}{[412, 507]} & 118 & ~\textcolor{gray}{[98, 139]} & 262 & ~\textcolor{gray}{[184, 330]} & 328 & ~\textcolor{gray}{[255, 397]} & 498 & ~\textcolor{gray}{[476, 522]} \\
quadruped-walk & 750 & ~\textcolor{gray}{[699, 796]} & 149 & ~\textcolor{gray}{[113, 184]} & 246 & ~\textcolor{gray}{[179, 310]} & 316 & ~\textcolor{gray}{[260, 379]} & 833 & ~\textcolor{gray}{[797, 867]} \\
reacher-easy & 938 & ~\textcolor{gray}{[903, 973]} & 113 & ~\textcolor{gray}{[55, 192]} & 956 & ~\textcolor{gray}{[932, 978]} & 735 & ~\textcolor{gray}{[678, 796]} & 979 & ~\textcolor{gray}{[978, 982]} \\
reacher-hard & 705 & ~\textcolor{gray}{[580, 831]} & 10 & ~\textcolor{gray}{[0, 30]} & 911 & ~\textcolor{gray}{[867, 946]} & 338 & ~\textcolor{gray}{[227, 461]} & 965 & ~\textcolor{gray}{[945, 977]} \\
walker-run & 546 & ~\textcolor{gray}{[475, 612]} & 39 & ~\textcolor{gray}{[35, 44]} & 665 & ~\textcolor{gray}{[566, 719]} & 669 & ~\textcolor{gray}{[615, 708]} & 615 & ~\textcolor{gray}{[571, 655]} \\
walker-stand & 980 & ~\textcolor{gray}{[977, 984]} & 253 & ~\textcolor{gray}{[210, 310]} & 937 & ~\textcolor{gray}{[907, 962]} & 969 & ~\textcolor{gray}{[966, 973]} & 980 & ~\textcolor{gray}{[977, 985]} \\
walker-walk & 766 & ~\textcolor{gray}{[489, 957]} & 47 & ~\textcolor{gray}{[40, 56]} & 958 & ~\textcolor{gray}{[952, 965]} & 942 & ~\textcolor{gray}{[936, 949]} & 970 & ~\textcolor{gray}{[968, 973]} \\
\midrule
Mean & 510 & ~\textcolor{gray}{[497, 523]} & 110 & ~\textcolor{gray}{[98, 125]} & 492 & ~\textcolor{gray}{[471, 512]} & 463 & ~\textcolor{gray}{[452, 475]} & 602 & ~\textcolor{gray}{[595, 608]} \\
Median & 626 & ~\textcolor{gray}{[528, 665]} & 49 & ~\textcolor{gray}{[32, 53]} & 572 & ~\textcolor{gray}{[419, 654]} & 493 & ~\textcolor{gray}{[420, 532]} & 813 & ~\textcolor{gray}{[779, 822]} \\
IQM & 545 & ~\textcolor{gray}{[519, 564]} & 58 & ~\textcolor{gray}{[46, 67]} & 501 & ~\textcolor{gray}{[458, 537]} & 452 & ~\textcolor{gray}{[430, 473]} & 692 & ~\textcolor{gray}{[678, 703]} \\
\bottomrule
\end{tabularx}
\end{table}

\begin{figure}[ht]
\centering
\includegraphics[width=\linewidth,trim=1.5in 1.75in 1.5in 1.15in,clip]{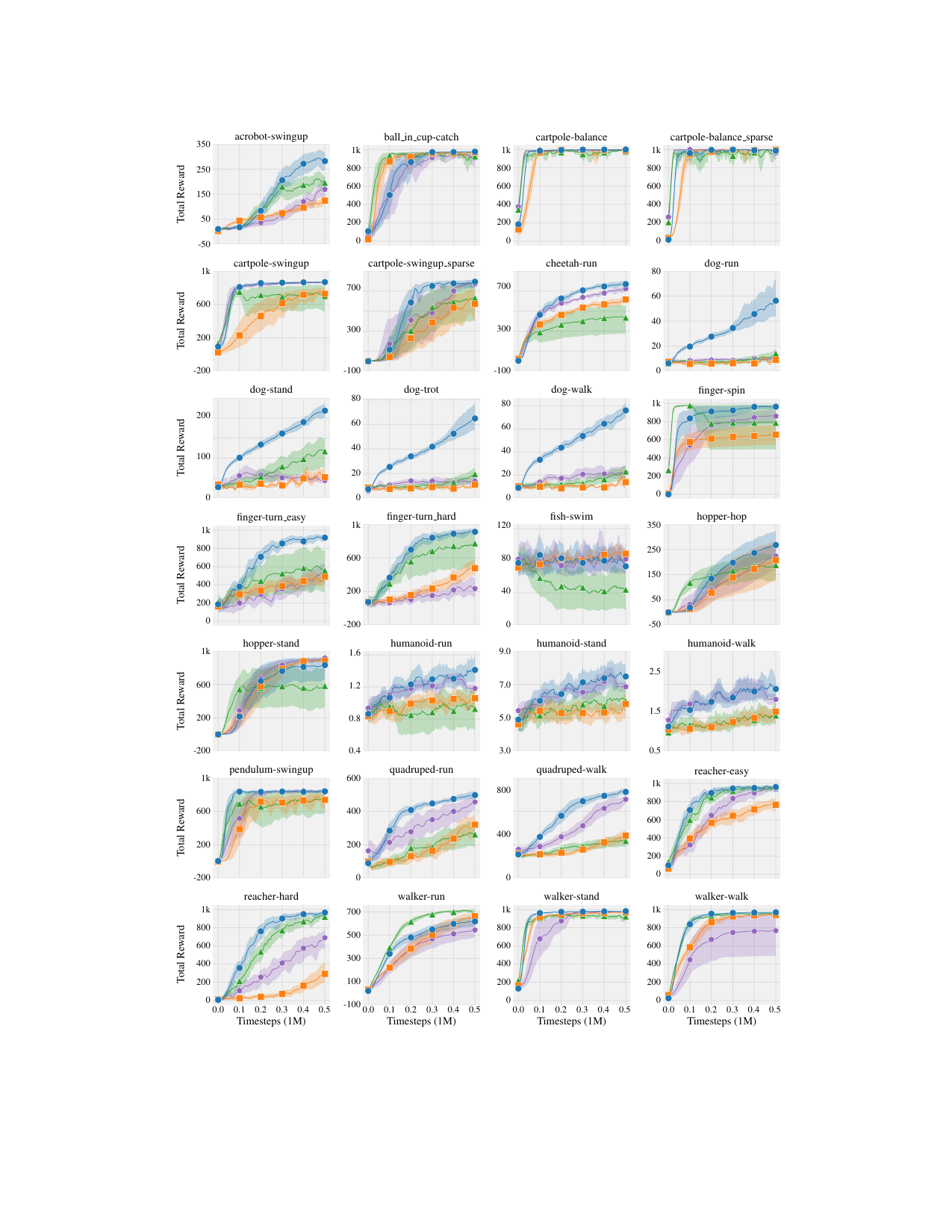} 

\fcolorbox{gray!10}{gray!10}{
\small
\coloredcircle{sb_blue}{sb_blue!25!light_gray} MR.Q \quad
\coloredsquare{sb_orange}{sb_orange!25!light_gray} DreamerV3 \quad
\coloredtriangle{sb_green}{sb_green!25!light_gray} TD-MPC2 \quad
\coloredplus{sb_red}{sb_red!25!light_gray} PPO \quad
\coloreddiamond{sb_brown}{sb_brown!25!light_gray} DrQ-v2
}

\captionof{figure}{\textbf{DMC - Visual learning curves.} 
Time steps consider the number of environment interactions, where 500k time steps equals 1M frames in the original environment. Results are over 10 seeds. \figuredescript 
} %
\end{figure}

\clearpage

\subsection{Atari}

\begin{table}[ht]
\centering
\tiny
\setlength{\tabcolsep}{0pt}
\newcolumntype{Y}{>{\centering\arraybackslash}X} %
\caption{\textbf{Atari final results.} Final average performance at 2.5M time steps (10M time steps in the original environment due to action repeat) over 10 seeds. \tabledescript human-normalized score.}
\begin{tabularx}{\textwidth}{l@{\hspace{4pt}} r@{}X@{\hspace{4pt}} r@{}X@{\hspace{4pt}} r@{}X@{\hspace{4pt}} r@{}X@{\hspace{4pt}} r@{}l@{}}
\toprule
Task & & \multicolumn{1}{l}{DQN} & & \multicolumn{1}{l}{Rainbow} & & \multicolumn{1}{l}{PPO} & & \multicolumn{1}{l}{DreamerV3} & & \multicolumn{1}{l}{MR.Q} \\
\midrule
Alien & 925 & ~\textcolor{gray}{[879, 968]} & 1220 & ~\textcolor{gray}{[1191, 1268]} & 320 & ~\textcolor{gray}{[251, 383]} & 4838 & ~\textcolor{gray}{[3863, 5813]} & 2834 & ~\textcolor{gray}{[2241, 3388]} \\
Amidar & 178 & ~\textcolor{gray}{[169, 186]} & 301 & ~\textcolor{gray}{[280, 330]} & 126 & ~\textcolor{gray}{[90, 167]} & 470 & ~\textcolor{gray}{[419, 524]} & 595 & ~\textcolor{gray}{[525, 657]} \\
Assault & 988 & ~\textcolor{gray}{[957, 1011]} & 1430 & ~\textcolor{gray}{[1392, 1475]} & 423 & ~\textcolor{gray}{[271, 581]} & 3518 & ~\textcolor{gray}{[2969, 4179]} & 1296 & ~\textcolor{gray}{[1254, 1343]} \\
Asterix & 2381 & ~\textcolor{gray}{[2313, 2469]} & 2699 & ~\textcolor{gray}{[2598, 2783]} & 296 & ~\textcolor{gray}{[216, 403]} & 7319 & ~\textcolor{gray}{[6251, 8354]} & 3358 & ~\textcolor{gray}{[3004, 3797]} \\
Asteroids & 423 & ~\textcolor{gray}{[408, 436]} & 754 & ~\textcolor{gray}{[711, 816]} & 206 & ~\textcolor{gray}{[180, 232]} & 1359 & ~\textcolor{gray}{[1243, 1482]} & 715 & ~\textcolor{gray}{[638, 796]} \\
Atlantis & 7365 & ~\textcolor{gray}{[6893, 7742]} & 80837 & ~\textcolor{gray}{[51139, 126780]} & 2000 & ~\textcolor{gray}{[2000, 2000]} & 664529 & ~\textcolor{gray}{[197588, 973362]} & 556845 & ~\textcolor{gray}{[469425, 660043]} \\
BankHeist & 474 & ~\textcolor{gray}{[448, 493]} & 895 & ~\textcolor{gray}{[889, 901]} & 187 & ~\textcolor{gray}{[41, 421]} & 801 & ~\textcolor{gray}{[691, 1002]} & 809 & ~\textcolor{gray}{[639, 960]} \\
BattleZone & 3598 & ~\textcolor{gray}{[3235, 3878]} & 20209 & ~\textcolor{gray}{[17157, 22375]} & 2200 & ~\textcolor{gray}{[1460, 3100]} & 22599 & ~\textcolor{gray}{[21055, 24669]} & 19880 & ~\textcolor{gray}{[13450, 26060]} \\
BeamRider & 869 & ~\textcolor{gray}{[728, 1065]} & 5982 & ~\textcolor{gray}{[5664, 6268]} & 479 & ~\textcolor{gray}{[348, 581]} & 5635 & ~\textcolor{gray}{[3161, 7962]} & 2299 & ~\textcolor{gray}{[1921, 2813]} \\
Berzerk & 488 & ~\textcolor{gray}{[466, 508]} & 443 & ~\textcolor{gray}{[413, 484]} & 384 & ~\textcolor{gray}{[310, 469]} & 758 & ~\textcolor{gray}{[681, 823]} & 523 & ~\textcolor{gray}{[456, 588]} \\
Bowling & 29 & ~\textcolor{gray}{[27, 32]} & 44 & ~\textcolor{gray}{[36, 52]} & 51 & ~\textcolor{gray}{[38, 60]} & 101 & ~\textcolor{gray}{[69, 138]} & 59 & ~\textcolor{gray}{[45, 72]} \\
Boxing & 37 & ~\textcolor{gray}{[31, 44]} & 68 & ~\textcolor{gray}{[66, 71]} & -3 & ~\textcolor{gray}{[-6, 0]} & 97 & ~\textcolor{gray}{[97, 99]} & 96 & ~\textcolor{gray}{[95, 97]} \\
Breakout & 21 & ~\textcolor{gray}{[19, 25]} & 41 & ~\textcolor{gray}{[40, 44]} & 9 & ~\textcolor{gray}{[8, 11]} & 137 & ~\textcolor{gray}{[110, 162]} & 34 & ~\textcolor{gray}{[28, 42]} \\
Centipede & 2832 & ~\textcolor{gray}{[2418, 3215]} & 4992 & ~\textcolor{gray}{[4784, 5138]} & 4239 & ~\textcolor{gray}{[2222, 6622]} & 20067 & ~\textcolor{gray}{[17410, 22758]} & 17835 & ~\textcolor{gray}{[16161, 19817]} \\
ChopperCommand & 997 & ~\textcolor{gray}{[971, 1022]} & 2265 & ~\textcolor{gray}{[2160, 2357]} & 688 & ~\textcolor{gray}{[501, 878]} & 15172 & ~\textcolor{gray}{[12940, 17219]} & 5748 & ~\textcolor{gray}{[4822, 6651]} \\
CrazyClimber & 64611 & ~\textcolor{gray}{[46203, 78709]} & 103539 & ~\textcolor{gray}{[99749, 106850]} & 896 & ~\textcolor{gray}{[174, 1727]} & 132811 & ~\textcolor{gray}{[128446, 135930]} & 116954 & ~\textcolor{gray}{[111371, 122032]} \\
Defender & 116954 & ~\textcolor{gray}{[111371, 122032]} & 116954 & ~\textcolor{gray}{[111371, 122032]} & 1333 & ~\textcolor{gray}{[705, 2094]} & 34187 & ~\textcolor{gray}{[29814, 39261]} & 40457 & ~\textcolor{gray}{[36892, 43638]} \\
DemonAttack & 1503 & ~\textcolor{gray}{[1282, 1690]} & 2477 & ~\textcolor{gray}{[2269, 2678]} & 139 & ~\textcolor{gray}{[116, 165]} & 4836 & ~\textcolor{gray}{[3443, 6231]} & 5924 & ~\textcolor{gray}{[4491, 7289]} \\
DoubleDunk & -18 & ~\textcolor{gray}{[-20, -18]} & -18 & ~\textcolor{gray}{[-19, -19]} & -1 & ~\textcolor{gray}{[-3, 0]} & 21 & ~\textcolor{gray}{[20, 22]} & -10 & ~\textcolor{gray}{[-15, -9]} \\
Enduro & 589 & ~\textcolor{gray}{[567, 617]} & 1601 & ~\textcolor{gray}{[1555, 1635]} & 13 & ~\textcolor{gray}{[9, 17]} & 476 & ~\textcolor{gray}{[175, 782]} & 1845 & ~\textcolor{gray}{[1758, 1938]} \\
FishingDerby & -42 & ~\textcolor{gray}{[-62, -17]} & 10 & ~\textcolor{gray}{[5, 15]} & -89 & ~\textcolor{gray}{[-91, -87]} & 40 & ~\textcolor{gray}{[32, 47]} & 10 & ~\textcolor{gray}{[2, 18]} \\
Freeway & 8 & ~\textcolor{gray}{[0, 19]} & 32 & ~\textcolor{gray}{[32, 32]} & 15 & ~\textcolor{gray}{[11, 18]} & 19 & ~\textcolor{gray}{[6, 32]} & 32 & ~\textcolor{gray}{[32, 32]} \\
Frostbite & 269 & ~\textcolor{gray}{[238, 294]} & 2510 & ~\textcolor{gray}{[2040, 2823]} & 245 & ~\textcolor{gray}{[231, 259]} & 5183 & ~\textcolor{gray}{[2151, 8291]} & 4561 & ~\textcolor{gray}{[3299, 5740]} \\
Gopher & 1470 & ~\textcolor{gray}{[1316, 1590]} & 4279 & ~\textcolor{gray}{[4139, 4425]} & 126 & ~\textcolor{gray}{[80, 174]} & 38711 & ~\textcolor{gray}{[26066, 48187]} & 19174 & ~\textcolor{gray}{[14932, 23587]} \\
Gravitar & 167 & ~\textcolor{gray}{[153, 183]} & 202 & ~\textcolor{gray}{[184, 218]} & 63 & ~\textcolor{gray}{[31, 98]} & 831 & ~\textcolor{gray}{[768, 900]} & 397 & ~\textcolor{gray}{[320, 490]} \\
Hero & 2679 & ~\textcolor{gray}{[2404, 2945]} & 9323 & ~\textcolor{gray}{[7914, 10863]} & 1741 & ~\textcolor{gray}{[1062, 2302]} & 20582 & ~\textcolor{gray}{[19845, 21583]} & 13450 & ~\textcolor{gray}{[11915, 14781]} \\
IceHockey & -9 & ~\textcolor{gray}{[-10, -9]} & -5 & ~\textcolor{gray}{[-6, -5]} & -8 & ~\textcolor{gray}{[-10, -8]} & 14 & ~\textcolor{gray}{[13, 16]} & 0 & ~\textcolor{gray}{[-1, 2]} \\
Jamesbond & 47 & ~\textcolor{gray}{[42, 52]} & 514 & ~\textcolor{gray}{[509, 520]} & 85 & ~\textcolor{gray}{[62, 106]} & 836 & ~\textcolor{gray}{[568, 1119]} & 624 & ~\textcolor{gray}{[588, 662]} \\
Kangaroo & 539 & ~\textcolor{gray}{[525, 553]} & 5501 & ~\textcolor{gray}{[3853, 7151]} & 402 & ~\textcolor{gray}{[280, 520]} & 8825 & ~\textcolor{gray}{[5234, 12418]} & 9807 & ~\textcolor{gray}{[7851, 11591]} \\
Krull & 4229 & ~\textcolor{gray}{[3942, 4490]} & 5972 & ~\textcolor{gray}{[5903, 6047]} & 421 & ~\textcolor{gray}{[136, 735]} & 23092 & ~\textcolor{gray}{[14679, 28172]} & 9309 & ~\textcolor{gray}{[8646, 9953]} \\
KungFuMaster & 15997 & ~\textcolor{gray}{[13182, 18813]} & 18074 & ~\textcolor{gray}{[16041, 20864]} & 52 & ~\textcolor{gray}{[18, 95]} & 70703 & ~\textcolor{gray}{[50114, 94578]} & 29369 & ~\textcolor{gray}{[26954, 31595]} \\
MontezumaRevenge & 0 & ~\textcolor{gray}{[0, 0]} & 0 & ~\textcolor{gray}{[0, 0]} & 0 & ~\textcolor{gray}{[0, 0]} & 1310 & ~\textcolor{gray}{[598, 2180]} & 50 & ~\textcolor{gray}{[0, 140]} \\
MsPacman & 2187 & ~\textcolor{gray}{[2121, 2247]} & 2347 & ~\textcolor{gray}{[2292, 2403]} & 457 & ~\textcolor{gray}{[352, 578]} & 4484 & ~\textcolor{gray}{[3539, 5511]} & 4922 & ~\textcolor{gray}{[4191, 5843]} \\
NameThisGame & 4000 & ~\textcolor{gray}{[3814, 4187]} & 8604 & ~\textcolor{gray}{[8252, 8931]} & 1084 & ~\textcolor{gray}{[663, 1501]} & 15742 & ~\textcolor{gray}{[14542, 17103]} & 8693 & ~\textcolor{gray}{[8071, 9199]} \\
Phoenix & 4948 & ~\textcolor{gray}{[4236, 5627]} & 4830 & ~\textcolor{gray}{[4707, 4968]} & 101 & ~\textcolor{gray}{[81, 120]} & 15827 & ~\textcolor{gray}{[14903, 16429]} & 5173 & ~\textcolor{gray}{[5025, 5322]} \\
Pitfall & -60 & ~\textcolor{gray}{[-89, -35]} & -14 & ~\textcolor{gray}{[-29, -6]} & -16 & ~\textcolor{gray}{[-38, -2]} & 0 & ~\textcolor{gray}{[0, 0]} & -20 & ~\textcolor{gray}{[-60, 0]} \\
Pong & -4 & ~\textcolor{gray}{[-14, 3]} & 15 & ~\textcolor{gray}{[14, 16]} & -5 & ~\textcolor{gray}{[-8, -3]} & 16 & ~\textcolor{gray}{[16, 17]} & 17 & ~\textcolor{gray}{[16, 19]} \\
PrivateEye & 118 & ~\textcolor{gray}{[78, 181]} & 111 & ~\textcolor{gray}{[78, 166]} & -17 & ~\textcolor{gray}{[-592, 762]} & 3046 & ~\textcolor{gray}{[975, 5118]} & 100 & ~\textcolor{gray}{[100, 100]} \\
Qbert & 1658 & ~\textcolor{gray}{[1246, 2139]} & 5353 & ~\textcolor{gray}{[4363, 6783]} & 484 & ~\textcolor{gray}{[393, 570]} & 16807 & ~\textcolor{gray}{[16073, 17564]} & 3938 & ~\textcolor{gray}{[3210, 4327]} \\
Riverraid & 3198 & ~\textcolor{gray}{[3167, 3222]} & 4272 & ~\textcolor{gray}{[4060, 4440]} & 1045 & ~\textcolor{gray}{[833, 1241]} & 9160 & ~\textcolor{gray}{[8177, 10077]} & 10791 & ~\textcolor{gray}{[9307, 12511]} \\
RoadRunner & 27980 & ~\textcolor{gray}{[27269, 28692]} & 33412 & ~\textcolor{gray}{[32459, 34435]} & 723 & ~\textcolor{gray}{[454, 940]} & 66453 & ~\textcolor{gray}{[40606, 104163]} & 49579 & ~\textcolor{gray}{[47425, 51426]} \\
Robotank & 4 & ~\textcolor{gray}{[4, 5]} & 19 & ~\textcolor{gray}{[18, 20]} & 4 & ~\textcolor{gray}{[2, 6]} & 51 & ~\textcolor{gray}{[47, 55]} & 13 & ~\textcolor{gray}{[12, 15]} \\
Seaquest & 299 & ~\textcolor{gray}{[277, 318]} & 1641 & ~\textcolor{gray}{[1621, 1661]} & 250 & ~\textcolor{gray}{[214, 282]} & 3416 & ~\textcolor{gray}{[2665, 4426]} & 3522 & ~\textcolor{gray}{[2401, 4850]} \\
Skiing & -19568 & ~\textcolor{gray}{[-19793, -19362]} & -24070 & ~\textcolor{gray}{[-25305, -22667]} & -27901 & ~\textcolor{gray}{[-30000, -23704]} & -30043 & ~\textcolor{gray}{[-30394, -29764]} & -30000 & ~\textcolor{gray}{[-30000, -30000]} \\
Solaris & 1645 & ~\textcolor{gray}{[1480, 1804]} & 1289 & ~\textcolor{gray}{[1143, 1451]} & 0 & ~\textcolor{gray}{[0, 2]} & 2340 & ~\textcolor{gray}{[1882, 2799]} & 1103 & ~\textcolor{gray}{[799, 1430]} \\
SpaceInvaders & 663 & ~\textcolor{gray}{[651, 675]} & 743 & ~\textcolor{gray}{[721, 764]} & 294 & ~\textcolor{gray}{[235, 354]} & 1433 & ~\textcolor{gray}{[1039, 1943]} & 701 & ~\textcolor{gray}{[626, 768]} \\
StarGunner & 692 & ~\textcolor{gray}{[662, 719]} & 1488 & ~\textcolor{gray}{[1470, 1506]} & 415 & ~\textcolor{gray}{[316, 499]} & 2090 & ~\textcolor{gray}{[1678, 2649]} & 3488 & ~\textcolor{gray}{[1032, 8241]} \\
Surround & 3488 & ~\textcolor{gray}{[1032, 8241]} & 3488 & ~\textcolor{gray}{[1032, 8241]} & -9 & ~\textcolor{gray}{[-10, -10]} & 5 & ~\textcolor{gray}{[4, 7]} & -2 & ~\textcolor{gray}{[-4, -2]} \\
Tennis & -21 & ~\textcolor{gray}{[-24, -19]} & -1 & ~\textcolor{gray}{[-2, 0]} & -20 & ~\textcolor{gray}{[-22, -19]} & -3 & ~\textcolor{gray}{[-11, 0]} & 0 & ~\textcolor{gray}{[0, 0]} \\
TimePilot & 1539 & ~\textcolor{gray}{[1479, 1613]} & 2703 & ~\textcolor{gray}{[2627, 2787]} & 548 & ~\textcolor{gray}{[450, 690]} & 7779 & ~\textcolor{gray}{[3128, 13016]} & 4382 & ~\textcolor{gray}{[4208, 4528]} \\
Tutankham & 112 & ~\textcolor{gray}{[97, 123]} & 179 & ~\textcolor{gray}{[165, 191]} & 29 & ~\textcolor{gray}{[17, 43]} & 253 & ~\textcolor{gray}{[240, 269]} & 164 & ~\textcolor{gray}{[145, 185]} \\
UpNDown & 7669 & ~\textcolor{gray}{[7116, 8147]} & 12397 & ~\textcolor{gray}{[11489, 13312]} & 595 & ~\textcolor{gray}{[428, 737]} & 284807 & ~\textcolor{gray}{[178615, 391388]} & 73095 & ~\textcolor{gray}{[40836, 108810]} \\
Venture & 25 & ~\textcolor{gray}{[6, 45]} & 19 & ~\textcolor{gray}{[14, 25]} & 2 & ~\textcolor{gray}{[0, 6]} & 0 & ~\textcolor{gray}{[0, 0]} & 112 & ~\textcolor{gray}{[0, 304]} \\
VideoPinball & 5129 & ~\textcolor{gray}{[4611, 5649]} & 26245 & ~\textcolor{gray}{[23075, 29067]} & 1005 & ~\textcolor{gray}{[0, 2485]} & 22345 & ~\textcolor{gray}{[20669, 23955]} & 53826 & ~\textcolor{gray}{[40600, 67972]} \\
WizardOfWor & 481 & ~\textcolor{gray}{[396, 542]} & 2213 & ~\textcolor{gray}{[1827, 2617]} & 225 & ~\textcolor{gray}{[185, 264]} & 7086 & ~\textcolor{gray}{[6518, 7730]} & 2599 & ~\textcolor{gray}{[2259, 2942]} \\
YarsRevenge & 9426 & ~\textcolor{gray}{[9177, 9656]} & 10708 & ~\textcolor{gray}{[10405, 11071]} & 1891 & ~\textcolor{gray}{[925, 2964]} & 62209 & ~\textcolor{gray}{[57783, 67113]} & 34861 & ~\textcolor{gray}{[29734, 40020]} \\
Zaxxon & 112 & ~\textcolor{gray}{[15, 230]} & 3661 & ~\textcolor{gray}{[3131, 4192]} & 0 & ~\textcolor{gray}{[0, 0]} & 17347 & ~\textcolor{gray}{[15320, 19385]} & 8850 & ~\textcolor{gray}{[8045, 9740]} \\
\midrule
Mean & 0.25 & ~\textcolor{gray}{[0.24, 0.26]} & 1.08 & ~\textcolor{gray}{[1.02, 1.14]} & -0.09 & ~\textcolor{gray}{[-0.10, -0.07]} & 3.74 & ~\textcolor{gray}{[3.29, 4.13]} & 2.54 & ~\textcolor{gray}{[2.34, 2.75]} \\
Median & 0.12 & ~\textcolor{gray}{[0.10, 0.12]} & 0.40 & ~\textcolor{gray}{[0.40, 0.47]} & 0.01 & ~\textcolor{gray}{[0.00, 0.01]} & 1.25 & ~\textcolor{gray}{[1.11, 1.47]} & 0.96 & ~\textcolor{gray}{[0.78, 0.98]} \\
IQM & 0.17 & ~\textcolor{gray}{[0.16, 0.17]} & 0.61 & ~\textcolor{gray}{[0.60, 0.62]} & 0.02 & ~\textcolor{gray}{[0.01, 0.02]} & 1.46 & ~\textcolor{gray}{[1.34, 1.51]} & 0.90 & ~\textcolor{gray}{[0.88, 0.94]} \\
\bottomrule
\end{tabularx}
\end{table}

\begin{figure}[ht]
\centering
\includegraphics[width=\linewidth,trim=1.5in 1.85in 1.5in 1.15in,clip]{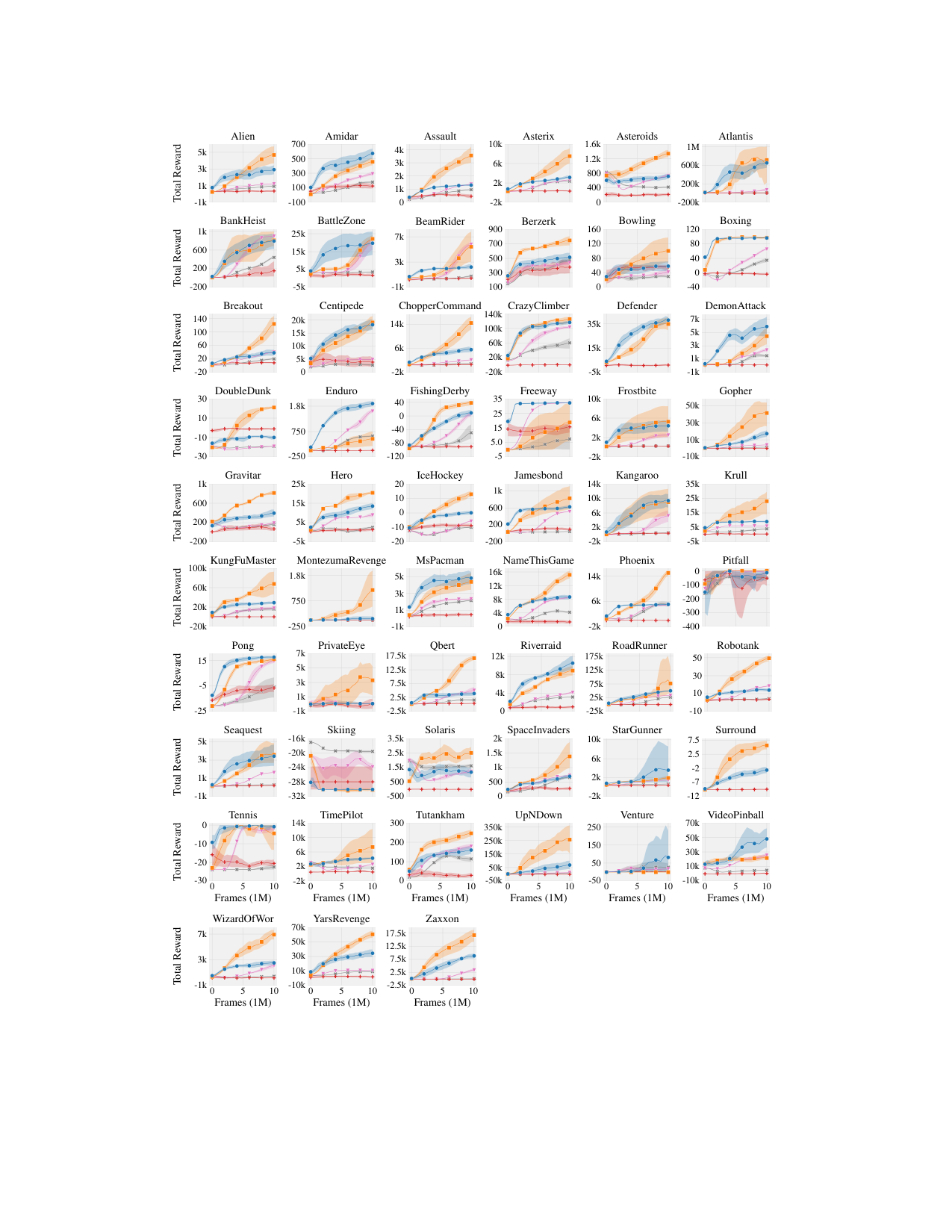} 

\fcolorbox{gray!10}{gray!10}{
\small
\coloredcircle{sb_blue}{sb_blue!25!light_gray} MR.Q \quad
\coloredsquare{sb_orange}{sb_orange!25!light_gray} DreamerV3 \quad
\coloredplus{sb_red}{sb_red!25!light_gray} PPO \quad
\coloredinvertedtriangle{sb_pink}{sb_pink!25!light_gray} Rainbow \quad
\coloredx{sb_gray}{sb_gray!25!light_gray} DQN
}

\captionof{figure}{\textbf{Atari learning curves.} Time steps consider the number of environment interactions, where 2.5M time steps equals 10M frames in the original environment. Results are over 10 seeds. \figuredescript}
\end{figure}

\clearpage

\section{Complete Ablation Results}

In this section, we show a per-environment breakdown of each variation in the design study in \cref{sec:design}. Each table reports the raw score for each environment. \tabledescript the difference in the normalized score. We use TD3 to normalize for Gym, raw scores divided by $1000$ for DMC and human scores to normalize for Atari (see \cref{appendix:envs}). 
Highlighting is used to designate the scale of the difference in normalized score:
\begin{itemize}[nosep]
    \item \hlfancy{sb_red!40}{$(\leq -0.5)$}
    \item \hlfancy{sb_red!25}{$[-0.2,-0.5)$}
    \item \hlfancy{sb_red!10}{$[-0.01,-0.2)$}
    \item \hlfancy{sb_green!10}{$[0.01,0.2)$}
    \item \hlfancy{sb_green!25}{$[0.2,0.5)$}
    \item \hlfancy{sb_green!40}{$(\geq 0.5)$}
\end{itemize}

\subsection{Gym}

\begin{figure}[ht]

\centering
\scriptsize
\setlength\tabcolsep{0pt}

\end{figure}

\end{document}